\definecolor{LightCyan}{rgb}{0.88,1,0.88}
\definecolor{DarkGreen}{rgb}{0.0,0.3,0.0}
\definecolor{DarkGreen2}{rgb}{0.0,0.7,0.0}
\newtheorem{theo}{Theorem}
\DeclareRobustCommand\onedot{\futurelet\@let@token\bmv@onedotaux}
\def\bmv@onedotaux{\ifx\@let@token.\else.\null\fi\xspace}
\def\eg{\emph{e.g}\onedot} 
\def\ie{\emph{i.e}\onedot} 
\def\etc{\emph{etc}\onedot} 
\def\wrt{w.r.t\onedot}
\def\aka{a.k.a\onedot}
\newcommand{\ZB}{{}}
\newcommand{\ZBL}{{}}
\newcites{latex}{References}
\def\ps@myheadings{%
    \let\@oddfoot\@empty\let\@evenfoot\@empty
    \def\@evenhead{\thepage\hfil\slshape\leftmark}%
    \def\@oddhead{{\slshape\rightmark}\hfil\thepage}%
    \let\@mkboth\@gobbletwo
    \let\sectionmark\@gobble
    \let\subsectionmark\@gobble
    }
  \renewcommand\maketitle{\begin{titlepage}%
  \let\footnotesize\small
  \let\footnoterule\relax
  \let \footnote \thanks
  \null\vfil
  \vskip 60\p@
  \begin{center}%
    {\LARGE \@title \par}%
    \vskip 3em%
    {\large
     \lineskip .75em%
      \begin{tabular}[t]{c}%
        \@author
      \end{tabular}\par}%
      \vskip 1.5em%
    {\large \@date \par}
  \end{center}\par
  \@thanks
  \vfil\null
  \end{titlepage}%
  \setcounter{footnote}{0}%
}
\renewcommand\maketitle{\par
  \begingroup
    \renewcommand\thefootnote{\@fnsymbol\c@footnote}%
    \def\@makefnmark{\rlap{\@textsuperscript{\normalfont\color{black}
    }}}%
    \long\def\@makefntext##1{\parindent 1em\noindent
            \hb@xt@1.8em{%
                \hss\@textsuperscript{\normalfont
                }}##1}%
    \if@twocolumn
      \ifnum \col@number=\@ne
        \@maketitle
      \else
        \twocolumn[\@maketitle]%
      \fi
    \else
      \newpage
      \global\@topnum\z@   
      \@maketitle
    \fi
    \thispagestyle{plain}
    \ifx\myappendix\undefined
    \@thanks
    \fi
  \endgroup
  \setcounter{footnote}{0}%
}
\begin{document}

\title{Exploiting Field Dependencies for Learning on Categorical Data}

\author{Zhibin~Li, 
        Piotr Koniusz$^*$, 
        Lu Zhang, 
        Daniel Edward Pagendam, 
        Peyman Moghadam
\IEEEcompsocitemizethanks{\IEEEcompsocthanksitem 
\vspace{-0.2cm}Z. Li is a research fellow for Machine Learning \& Artificial Intelligence Future Science Platform, CSIRO, Brisbane, Australia. E-mail: lzb5600@gmail.com
\protect\\
\vspace{-0.3cm}
\IEEEcompsocthanksitem P. Koniusz, D. E. Pagendam and P. Moghadam are with Data61, CSIRO, Australia. E-mails: firstname.lastname@data61.csiro.au\protect\\
\vspace{-0.3cm}
\IEEEcompsocthanksitem P. Koniusz ($^*\!$corresponding author: piotr.koniusz@data61.csiro.au) is also with the Australian National University, Canberra, Australia.\protect\\
\vspace{-0.3cm}
\IEEEcompsocthanksitem P. Moghadam is also with the Queensland University of Technology, Brisbane, Australia. 
\protect\\
\vspace{-0.3cm}
\IEEEcompsocthanksitem L. Zhang is with the Queensland Brain Institute, University of Queensland, Brisbane, Australia. E-mail: l.zhang3@uq.edu.au}
\vspace{-0.1cm}
\thanks{Code: \url{https://github.com/csiro-robotics/MDL}}
\thanks{Manuscript submitted in Jun 2022. Manuscript  accepted by the IEEE Transactions on Pattern Analysis and Machine Intelligence in July 2023.}
\vspace{-0.3cm}
}

\markboth{Journal of \LaTeX\ Class Files,~Vol.~14, No.~8, June~2022}%
{Shell \MakeLowercase{\textit{Z. Li et al.}}: Exploiting Field Dependencies for Learning on Categorical Data}

\IEEEtitleabstractindextext{%
\begin{abstract}
Traditional approaches for learning on categorical data underexploit the dependencies between columns (\aka fields) in a dataset because they rely on the embedding of data points driven alone by the classification/regression loss. In contrast, we propose a novel method for learning on categorical data with the goal of exploiting dependencies between fields. Instead of modelling statistics of features globally (\ie, by the covariance matrix of features), we learn a global field dependency matrix that captures dependencies between fields and then we refine the global field dependency matrix at the instance-wise level with different weights (so-called local dependency modelling) \wrt each field to improve the modelling of the field dependencies. Our algorithm exploits the meta-learning paradigm, \ie, the dependency matrices are refined in the inner loop of the meta-learning algorithm without the use of labels, whereas the outer loop intertwines the updates of the embedding matrix (the matrix performing projection) and global dependency matrix in a supervised fashion (with the use of labels). Our method is simple yet it outperforms several state-of-the-art methods on six popular dataset benchmarks. Detailed ablation studies provide additional insights into our method. 
\end{abstract}

\begin{IEEEkeywords}
Categorical data, meta-learning, embedding, representation learning.
\end{IEEEkeywords}}
\maketitle
\IEEEpeerreviewmaketitle

\IEEEraisesectionheading{\section{Introduction}\label{sec:intro}}

\IEEEPARstart{L}{earning} with categorical data is one of the fundamental tasks in many applications. For example, in on-line advertising, Click-Through Rate (CTR) prediction relies on the categorical data~\cite{chapelle2014simple}. Other applications such as the web search~\cite{agichtein2006learning}, recommender systems~\cite{koren2009matrix}  and even disease diagnosis \cite{LIU2023120267} also depend heavily on the categorical data.  As presented in Table~\ref{data_exm}, the categorical data can be organised into different columns \aka {\it~fields}, \eg, Shoes, Lipstick and Computer are the categorical features under the field ``Product''. With the growing number of instances, the number of categorical features often increases to a problematic scale.

\begin{table}[httb!]
\caption{Example of categorical data for ad-click prediction.}
\label{data_exm}
\centering
\begin{tabular}{c|ccc}
\toprule
Clicked &  Gender & Advertiser & Product  \\ \midrule
No      &  Male   & Nike       & Shoes    \\
No      &  Male   & L'Oréal    & Lipstick     \\
Yes     &  Female & Apple      & Computer \\ \bottomrule
\end{tabular}
\end{table}

Categorical data poses a fundamental challenge regarding how to embed the discrete categorical features into some expressive latent (feature) space suitable for the learning and inference steps.  In supervised learning, the usual approach is to assign an embedding vector \aka \textit{embedding} to each categorical feature, and then feed these vectors into subsequent modules, \eg, Multi-Layer Perceptron (MLP)~\cite{cheng2016wide}. Categorical features are firstly transformed into binary features through one-hot encoding and then used to select embedding vectors from the so-called {\it embedding matrix}\footnote{The embedding matrix performs embedding (projection). Note that  multiplying such a matrix with a one-hot vector simply selects  from the matrix a vector corresponding to non-zero element of one-hot vector.} Due to the sparsity of categorical features, for each instance, only the embedding vectors corresponding to the non-zero features (one per field) will form the input to the model.  The embedding matrix that performs embedding is optimised together with other model parameters in an end-to-end manner. However, the quality of embeddings is typically driven by  classification or regression loss, whereas our approach seeks to explicitly model and exploit dependencies between fields. Meta-learning \cite{finn2017model,metz2018meta} offers a unique opportunity to (i) exploit instance-wise dependencies between fields which can also help optimise the embeddings for the supervised learning task, (ii) perform the unsupervised instance-wise dependency refinement not only in training but also at the test stage to adapt to the test data without the use of test label we care to predict.

\vspace{0.1cm}
\noindent\textbf{Motivation.}
Features in a dataset are often correlated, \ie, in Table~\ref{data_exm}, Shoes and Nike are correlated with each other. Numerous prediction tasks~\cite{adc}, \eg, ad-click, demographics, app preference prediction and intent analysis \cite{ZhangSZXLYY22} exhibit such field correlations. Indeed, any column in Table~\ref{data_exm} can be chosen for the target prediction, whereas remaining columns may serve as input features. It is intuitive to assume that these columns share some dependencies which help infer values of one column from the remaining columns. Such an observation inspires us to explore such dependencies with the goal of improving learning/inference on categorical data.

Modelling feature dependencies is notoriously hard and costly for sparse high-dimensional data~\cite{pourahmadi2013high,Lin_2018_ECCV,sfa_yifei,simon_isice}. The quality of dependencies also varies across different instances and prediction tasks. To address these issues, we design a method that can: (i) model the field dependencies instead of feature dependencies and (ii) refine the field dependencies for a better fit to each instance and the specific task. In what follows, an efficient algorithm for learning the dependencies is proposed. To this end, we employ meta-learning to optimise the so-called dependency loss. This process is supervised by the task-driven loss (\eg, logistic loss for binary classification) with the goal of finding a global field dependency matrix expressing dependencies between fields globally and an embedding matrix whose role is to perform embedding. In doing so, we learn a global field dependency matrix for all instances which can further be refined to capture the field dependencies per instance (so-called local dependencies) and help reduce the task-specific loss. We note that our meta-learning based algorithm can rapidly adapt to the new incoming test data (test tasks) during the inference stage by simply performing a few of gradient descent steps \wrt the dependency loss, which does not rely on target labels. {\ZBL The use of global field dependencies greatly eases  learning of categorical feature dependencies as global field dependencies can be captured by significantly fewer parameters compared to categorical feature dependencies. Learning categorical feature dependencies is particularly hard for sparse and high-dimensional data resulting in overfitting and high computational and memory costs. 
However,  the global field dependencies may fail to capture  finer relations compared to the categorical feature dependencies. 
In our case, the deficiency of global field dependencies in modelling finer-grained feature dependencies is alleviated by the meta-learned fine-tuning scheme. Meta-learning  
helps fine-tune from the global field dependencies level down to the instance-wise feature dependencies, which helps the task-driven (supervised) loss. The global field dependencies and tuning process are performed jointly by meta-learning  to optimally help the task-driven loss.}


\vspace{0.2cm}
In summary, our contributions are three-fold:

\vspace{0.1cm}
\renewcommand{\labelenumi}{\roman{enumi}.}
\begin{enumerate}[leftmargin=0.6cm]
\item To the best of our knowledge, we are the first to explicitly exploit field dependencies for supervised learning on large-scale categorical data. To this end, we propose to capture instance-specific field dependencies (local dependencies) by an efficient optimisation algorithm.
\item We design an effective meta-learning algorithm for learning an optimally-dependent embedding matrix and field dependencies that are refined \wrt different instances. This also enables us to utilise the dependencies in an unsupervised manner during the test stage.
\item We conduct extensive experiments on six datasets of varying scale to verify the effectiveness of our method, with ablation studies providing additional insights.
\end{enumerate}

\section{Methodology}
In this section, we briefly introduce the background required to understand our method, \eg, the embedding step. Next, we provide details of dependency modelling among fields. Finally, we show how dependency modelling facilitates the subsequent learning tasks via our meta-learning algorithm.
\begin{figure*}
  \centering
  \includegraphics[width=1.0\linewidth]{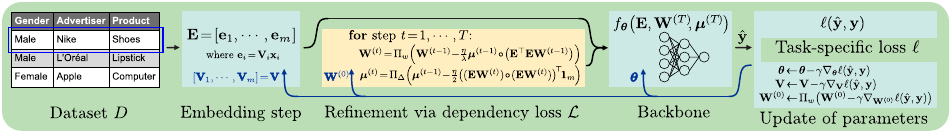}
  \caption{The proposed framework. The inner and outer loops of Algorithm \ref{algo} are indicated by yellow and blue panels. Blue lines carry parameters of the model updated in the outer loop. See equations of Embedding step \eqref{ebd}, Refinement step \eqref{grdupdtw}, \eqref{grdupdtmu} via the dependency loss $\mathcal{L}$, Backbone \eqref{bkb} and the Task-specific loss \eqref{expctedloss} with Update of parameters (Step 9 of Alg. \ref{algo}).}\label{framework}
\end{figure*}
\subsection{Background}
For the $m$-field data, each instance can be represented as a sparse vector: 
$$\mathbf{x}=[\mathbf{x}_1;\mathbf{x}_2;\cdots;\mathbf{x}_m]\in\mathbb{R}^{d_1+\cdots+d_m},$$
where each $\mathbf{x}_i\in\mathbb{R}^{d_i}$ is a one-hot-encoding column vector for $i$-th field, and $d_i$ denotes cardinality of $i$-th field. Stacking all vectors $\mathbf{x}_1,\cdots,\mathbf{x}_m$ vertically we get the one-hot-encoding vector $\mathbf{x}\in\mathbb{R}^{d_1+\cdots+d_m}$. Each $\mathbf{x}_i$ is usually transformed into a dense representation $\mathbf{e}_i\in\mathbb{R}^{k}$ by an associated embedding matrix $\mathbf{V}_i\in\mathbb{R}^{k\times d_i}$ and obtain:
\begin{equation}\label{ebd}
    \mathbf{E} = [\mathbf{e}_1,\mathbf{e}_2,\cdots,\mathbf{e}_m]\in\mathbb{R}^{k\times m},\;\; \mathbf{e}_i=\mathbf{V}_i\mathbf{x}_i, \forall i=1,2,\cdots,m,
\end{equation}
where the embedding $\mathbf{E}$ is obtained through the  {\it embedding matrix} $\mathbf{V}=[\mathbf{V}_1,\mathbf{V}_2,\cdots,\mathbf{V}_m]\in\mathbb{R}^{k\times(d_1+\cdots+d_m)}$. The embedding dimension $k$ is a hyper-parameter. The embedding $\mathbf{E}$ is then feed into subsequent modules and the embedding matrix $\mathbf{V}$ is learned altogether with other parameters in subsequent modules.   

The embedding process in \eqref{ebd} can be generalised for data consisting of not just categorical features but also  numerical features, \eg, one may apply $x_i\in\mathbb{R}^1$ and $\mathbf{V}_i\in\mathbb{R}^{k\times 1}$ if $i$-th field consists of numerical feature $x_i$ rather than vector $\mathbf{x}_i$. 

\subsection{Modelling the Field Dependencies}

It has been shown that categorical features enjoy dependency which could be used to improve learning and inference~\cite{adc}. Thus, we propose to model the dependencies among fields. More precisely, we model dependencies among dense embeddings obtained through \eqref{ebd} as embeddings capturing useful dependencies are more informative and helpful for the subsequent learning tasks. We use a simple loss function to 
express the $k$-th field of instance $\mathbf{x}$ as a weighted linear combination of remaining fields of instance $\mathbf{x}$. We have:
\begin{equation}\label{singleloss}
    \mathcal{L}_k(\mathbf{w}_k)  = \frac{1}{2}||\mathbf{e}_k - \sum_{i\neq k}   w_{ik}\mathbf{e}_i||_2^2,
\end{equation}
where index $i\!\in\![1,m]\!\setminus\!\{k\}$, set $[1,m]\equiv\{1,\cdots,m\}$ and $||\cdot||_2$ is the L2 norm.
As such, the smaller this loss function is, the stronger dependency between $\mathbf{e}_k$ and $\{\mathbf{e}_i| i\neq k\}$. Each $w_{ik}$ is a learnable coefficient of vector $\mathbf{w}_k$ except we let $w_{kk} = -1$. By adopting a weighted combination of \eqref{singleloss}, we obtain the dependency capturing loss \aka \textbf{dependency loss}:
\begin{equation}\label{u_loss}
\begin{aligned}
    \mathcal{L}( \mathbf{W}, \boldsymbol{\mu} ) &=\frac{1}{\lambda}\sum_{k=1}^{m} \mu_k \mathcal{L}_k(\mathbf{w}_k), 
\end{aligned}
\end{equation}
with $\mathbf{W} = [\mathbf{w}_1,\cdots,\mathbf{w}_m]\in\mathbb{R}^{m\times m}$. Moreover, when we minimize \eqref{u_loss}, we impose constraints $\mu_k \geq 0, \forall k \in[1,m]$ and $ \sum_{k=1}^m \mu_k=\lambda$. 
In this paper, we refer to $\mathbf{W}$ as the \textbf{dependency matrix}. The vector $\boldsymbol{\mu} = [\mu_1,\cdots,\mu_m]^\top$ contains  non-negative weight coefficients to be learned from data (one may think of them as attention scores stored in a vector). Hyper-parameter $\lambda>0$ determines the divergence of $\boldsymbol{\mu}/\lambda$ from a uniform distribution $\frac{1}{m}\mathbf{1}_m$ (vector containing $m$ coefficients, each equal $\frac{1}{m}$). This could be interpreted as a prior which prevents the dependency loss from overfitting to a single field (one extreme) and overreliance on a dense combination of all fields (another extreme). In proof of Theorem~\ref{theorem1} in Section \ref{optmsec}, we explain how $\lambda$ impacts this prior  during optimisation. In what follows, for brevity we often write  $\mathcal{L}_k$ and $\mathcal{L}$ in place of $\mathcal{L}_k( \mathbf{w}_k )$ and  $\mathcal{L}( \mathbf{W}, \boldsymbol{\mu} )$ respectively. 

Notice that the dependency loss $\mathcal{L}$ can be rewritten into a simpler matrix-based form:

\begin{equation}\label{siml}
\begin{aligned}
    \mathcal{L} = & \frac{1}{2\lambda} \langle\boldsymbol{\mu}\circ(\mathbf{E} \mathbf{W}), \mathbf{E} \mathbf{W}\rangle, 
\end{aligned}
\end{equation}
with the following constraints imposed when minimizing \eqref{siml}:
\begin{equation}
\label{constr}
\begin{aligned}
                & w_{kk} = -1, \forall k \in[1,m],\\
               & \mu_k \geq 0, \forall k \in[1,m],\\
               & \sum_{k=1}^m \mu_k=\lambda.
\end{aligned}
\end{equation}

Symbol $\circ$ in  \eqref{siml} is the element-wise product with broadcast. Specifically, we multiply $k$-th column of $(\mathbf{E} \mathbf{W})$ by $\mu_k$. Moreover, $\langle\cdot,\cdot\rangle$ denotes the inner product, and 
$\lambda>0$ in \eqref{siml} 
has the same role as already defined for \eqref{u_loss}.

\subsection{Optimising the Field Dependencies for the Supervised Task with Meta-Dependency Learning}\label{optmsec}

For simplicity, we use supervised classification task as an example to introduce our method. Using the dependency loss as a regularisation term in addition to the classification loss could be sub-optimal because it is hard to find a good stable trade-off between such both terms. Moreover, relying solely on a global dependency matrix $\mathbf{W}$ to capture field dependencies for all instances is sub-optimal as verified by our empirical study as
such a dependency matrix disregards instance-wise feature variations in each field. On the other hand, simply assuming different dependency matrices for difference instances involves a huge number of parameters, which would result in poor generalisation to unseen data due to overfitting.

In order to address these issues, we design a meta-learning algorithm that  refines the dependency matrix instance-wise via the dependency loss whose role is to transfer unsupervised knowledge with the goal of improving the effectiveness of classification task.

\subsubsection{Projected Gradient Descent for Refining the Field Dependencies}
A \textbf{global dependency matrix} $\mathbf{W}^{(0)}$  serves as the initial state of the dependency matrices for all instances. We use a meta-learning algorithm from Section~\ref{metal} to learn this global dependency matrix. Subsequently, starting from the global dependency matrix $\mathbf{W}^{(0)}$, for every instance, we update the dependency matrix over a fixed number of gradient descent steps  on the dependency loss $\mathcal{L}$. Notice that the number of gradient descent steps is a hyper-parameter. Specifically, we do not minimise the dependency loss $\mathcal{L}$ in full to prevent overfitting to individual instances.   

We use the projected gradient descent \cite{levitin1966constrained} for optimising $\mathcal{L}$ \wrt $\mathbf{W}$ and $\boldsymbol{\mu}$ due to the constraints \eqref{constr} imposed on \eqref{siml}. 
The projected gradient descent step $t$ for updating the dependency matrix is given as:
\begin{equation}\label{grdupdtw}
    \mathbf{W}^{(t)} =  \Pi_w\Big(\mathbf{W}^{(t-1)}  - \frac{\eta}{\lambda}    \boldsymbol{\mu}^{(t-1)}\circ \big(\mathbf{E}^\top \mathbf{E} \mathbf{W}^{(t-1)}\big) \Big),
\end{equation}
where $\Pi_w(\cdot)$ defines the projection that sets the diagonal elements of its argument  $\mathbf{W}^{(t-1)} - \frac{\eta}{\lambda}    \boldsymbol{\mu}^{(t-1)}\circ \big(\mathbf{E}^\top \mathbf{E} \mathbf{W}^{(t-1)}\big)$ to $-1$ and $\eta$ is the learning rate. The optimisation starts from the global dependency matrix $\mathbf{W}^{(0)}$ and $\boldsymbol{\mu}^{(0)}$ in step $1$.

When optimising \eqref{siml} \wrt $\boldsymbol{\mu}$, one can  see that setting $\mu_i=\lambda$ and  $\mu_j\!=\!0,\!\forall j\!\neq\!i$  is optimal if $i=\arg\min_k \mathcal{L}_k$. This however results in the sparsest solution leading to the case where only a single column of the dependency matrix $\mathbf{W}$ being refined in subsequent steps, which is an oversimplification of underlying dependency pattern. In order to flexibly refine $\boldsymbol{\mu}$, we use following projected gradient descent step:
\begin{equation}\label{grdupdtmu}
    \boldsymbol{\mu}^{(t)} =  \Pi_\Delta\Big(\boldsymbol{\mu}^{(t-1)}  - \frac{\eta}{2}  \big((\mathbf{E}\mathbf{W}^{(t)})\circ(\mathbf{E}\mathbf{W}^{(t)})\big)^\top \mathbf{1}_m\Big).
\end{equation}
The optimisation starts from $\boldsymbol{\mu}^{(0)}=\mathbf{1}_m$ where $\mathbf{1}_m$ is a vector of length $m$ containing all coefficients equal one. $\Pi_\Delta(\cdot)$ defines the projection onto a simplex $\Delta = \{\boldsymbol{\mu}|\mu_k \geq 0,\forall k\in[1,m], \sum_{k=1}^m \mu_k=\lambda\}$ where $\lambda$ is a hyper-parameter controlling the flexibility of $\boldsymbol{\mu}$ (see the proof of Theorem~\ref{theorem1} for details). We also multiply the learning rate by $\lambda$ compared to updates of $\mathbf{W}$ in \eqref{grdupdtw} for faster convergence. Let
\begin{equation}
\hat{\boldsymbol{\mu}}^{(t)} = \boldsymbol{\mu}^{(t-1)}  - \frac{\eta}{2}  \big((\mathbf{E}\mathbf{W}^{(t-1)})\circ(\mathbf{E}\mathbf{W}^{(t-1)})\big)^\top \mathbf{1}_m, 
\end{equation}
then the projection $\Pi_\Delta(\hat{\boldsymbol{\mu}}^{(t)})$ is obtained by solving the following problem (based on the definition of projected gradient descent):
\begin{equation}\label{qpp}
\begin{aligned}
\boldsymbol{\mu}^{(t)} = \Pi_\Delta\big(\hat{\boldsymbol{\mu}}^{(t)}\big)=&
{\arg\min}_{\boldsymbol{\mu} \in \mathbb{R}^{m}}  \frac{1}{2}\|\hat{\boldsymbol{\mu}}^{(t)}-\boldsymbol{\mu}^{(t)}\|_2^{2}  \\
 & \text { s.t. } \mu_k \geq 0, \forall k \in[1,m],\\
               & \sum_{k=1}^m \mu_k=\lambda,
\end{aligned}
\end{equation}
where \eqref{qpp} is a Quadratic Programming problem that can be solved via  existing solvers~\cite{mosek,cplex2009v12}. However, most of solvers involve an iterative process with some approximation error. We provide a closed-form solution to the above projection problem in Algorithm~\ref{algproj}. This enables us to incorporate such a problem seamlessly and efficiently into many existing methods based on automatic differentiation and GPU acceleration~\cite{paszke2019pytorch}. A closed-form solution to a similar problem was given in previous work~\cite{laha2018controllable} where the constraint involves an additional L2 norm loss term over the simplex. 
\begin{theo}\label{theorem1}
The solution to the optimisation problem \eqref{qpp} is given by Algorithm~\ref{algproj} in log-linear time \wrt $m$.
\end{theo}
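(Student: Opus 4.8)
The plan is to recognise \eqref{qpp} as the Euclidean projection of $\hat{\boldsymbol{\mu}}^{(t)}$ onto the scaled probability simplex $\Delta$, a strictly convex program with a unique minimiser, and to characterise that minimiser through its KKT conditions. First I would attach a multiplier $\tau\in\mathbb{R}$ to the equality constraint $\sum_k\mu_k=\lambda$ and multipliers $\beta_k\geq 0$ to the inequalities $\mu_k\geq 0$, and write the Lagrangian $\frac{1}{2}\|\hat{\boldsymbol{\mu}}^{(t)}-\boldsymbol{\mu}\|_2^2+\tau(\sum_k\mu_k-\lambda)-\sum_k\beta_k\mu_k$. Stationarity in $\mu_k$ gives $\mu_k=\hat{\mu}^{(t)}_k-\tau+\beta_k$, and combining this with primal feasibility, dual feasibility $\beta_k\geq 0$ and complementary slackness $\beta_k\mu_k=0$ yields the soft-threshold form $\mu_k=\max\{\hat{\mu}^{(t)}_k-\tau,\,0\}$ for every $k$: when $\mu_k>0$ we must have $\beta_k=0$ and hence $\mu_k=\hat{\mu}^{(t)}_k-\tau$, while $\mu_k=0$ forces $\hat{\mu}^{(t)}_k-\tau\leq 0$.

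This reduces the whole problem to finding the single scalar $\tau$ for which $\sum_{k=1}^m\max\{\hat{\mu}^{(t)}_k-\tau,\,0\}=\lambda$. I would then observe that the map $\tau\mapsto\sum_k\max\{\hat{\mu}^{(t)}_k-\tau,0\}$ is continuous, piecewise-linear and strictly decreasing on the range where it is positive, running from $+\infty$ as $\tau\to-\infty$ down to $0$ for $\tau\geq\max_k\hat{\mu}^{(t)}_k$; since $\lambda>0$ this guarantees existence and uniqueness of the required threshold $\tau$. The only thing left is to compute $\tau$ explicitly, and for this the key structural fact is that the support of the optimal $\boldsymbol{\mu}$ is a \emph{prefix} of the coordinates once they are sorted in decreasing order: if $u_1\geq u_2\geq\cdots\geq u_m$ denote the sorted entries of $\hat{\boldsymbol{\mu}}^{(t)}$, then a coordinate is active precisely when its value exceeds $\tau$, so the active set is exactly $\{1,\dots,\rho\}$ for some cardinality $\rho$. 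On the active set the equality constraint reads $\sum_{r=1}^{\rho}(u_r-\tau)=\lambda$, giving the closed form $\tau=\frac{1}{\rho}\big(\sum_{r=1}^{\rho}u_r-\lambda\big)$.

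The crux of the argument — and the step I expect to require the most care — is proving that the integer $\rho$ selected by Algorithm~\ref{algproj}, namely the largest index $j$ for which $u_j-\frac{1}{j}\big(\sum_{r=1}^{j}u_r-\lambda\big)>0$, really is the cardinality of the active set. I would establish this by a monotonicity argument: writing $\tau_j=\frac{1}{j}\big(\sum_{r=1}^{j}u_r-\lambda\big)$, I would show that the condition $u_j>\tau_j$ holds for all indices up to $\rho$ and fails beyond it, so that $\tau=\tau_\rho$ is exactly the threshold solving the scalar equation above and that $\max\{u_r-\tau,0\}$ is positive precisely for $r\leq\rho$; this verifies both feasibility and optimality of the reconstructed $\boldsymbol{\mu}$. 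Once correctness is in place, the complexity claim is immediate: the only super-linear operation is the initial sort of the $m$ entries of $\hat{\boldsymbol{\mu}}^{(t)}$, costing $O(m\log m)$, while forming the prefix sums, scanning for $\rho$, evaluating $\tau$ and applying the soft-threshold are each $O(m)$, so the overall running time is log-linear in $m$ as claimed.
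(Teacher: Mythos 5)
Your proposal is correct and follows essentially the same route as the paper's own proof: KKT conditions for the strictly convex projection onto the scaled simplex, the soft-threshold form $\mu_k=\max\{\hat\mu_k-\tau,0\}$, recovery of the threshold from the equality constraint over the sorted prefix, and the $\mathcal{O}(m\log m)$ cost dominated by sorting. If anything, you are more careful than the paper at the one delicate step—verifying that the largest index $j$ with $u_j-\tau_j>0$ really equals the active-set cardinality—which the paper asserts without an explicit monotonicity argument.
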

\begin{proof}
Our proof exploits the fact that \eqref{qpp} is a strictly-convex optimisation problem with affine constraints, and therefore, Slater's condition holds. In this case, the KKT conditions provide necessary and sufficient conditions for optimality.

The Lagrangian of the problem is:
\begin{equation}
L(\boldsymbol{\mu}, \boldsymbol{\alpha}, \beta)=\frac{1}{2}\|\hat{\boldsymbol{\mu}}-\boldsymbol{\mu}\|_2^{2} - \sum_{k=1}^m \alpha_k \mu_{k} -  \beta \Big(\sum_{k=1}^m   \mu_{k}-\lambda\Big),
\end{equation}
where we omit the superscript $(t)$ for brevity. 

The corresponding KKT conditions \wrt optimal solutions ($\boldsymbol{\mu}^{*}$, $\boldsymbol{\alpha}^{*}$, $\beta^{*}$) are (with applicable indices):
\begin{align}
\hat\mu_k - \mu_k^{*} - \alpha_k^{*}- \beta^{*} &=0, \label{kt1}\\  
\alpha_k^{*} & \geq 0,  \label{kt2}\\ 
\alpha_k^{*} \mu_k^{*} &=0,  \label{kt3} \\ 
\sum_{k=1}^m   \mu_{k}^{*}-\lambda &= 0 ,  \label{kt4}\\ 
\mu_{k}^{*} & \geq 0.   \label{kt5} 
\end{align}
According to the above KKT conditions, the following statements hold:
\begin{equation}
    \mu_{k}^{*} >0 \overset{\eqref{kt3}}{\implies} \alpha_k^{*} = 0 \overset{\eqref{kt1}}{\implies} \mu_k^{*} = \hat\mu_k-\beta^{*}>0. \label{kt6}
\end{equation}
We sort elements of $\hat{\boldsymbol{\mu}}$ into descending order $\hat\mu_{(1)}\geq\hat\mu_{(2)}\geq\cdots\geq\hat\mu_{(m)}$. From \eqref{kt4} we obtain:
\begin{equation}
    \beta^{*} =  \frac{\sum_{k=1}^K   \hat\mu_{(k)}-\lambda}{K}, \label{kt7}
\end{equation}
where $K$ in \eqref{kt7} denotes the cardinality of set $\{k|\hat\mu_k-\beta^{*}>0\}$. Using the above conditions, $K$ can be calculated by:
\begin{equation}
    K = \max \Big(\Big\{i|\hat\mu_{(i)} - \frac{\sum_{k=1}^i   \hat\mu_{(k)}-\lambda}{i} >0\Big\}\Big). 
\end{equation}
Then $\boldsymbol{\mu}^{*}$ can be calculated accordingly after obtaining $K$.

From above derivation we can see that for $0<\lambda<\infty$, elements of $\boldsymbol{\mu}^{*}$ are non-decreasing. For $\lambda\rightarrow 0^+$, the solution $\boldsymbol{\mu}^{*}$ tends to be sparse, and for $\lambda\rightarrow \infty$, the solution tends to be a vector with all its coefficients being equal. The scaling effect of $\lambda$ is cancelled out by $\frac{1}{\lambda}$ in \eqref{siml}. We  also show in the ablation study that as $\lambda$ grows, $\boldsymbol{\mu}^{*}/\lambda$ tends to be closer to uniform distribution $\frac{1}{m}\mathbf{1}_m$.

The time complexity is dominated by 
sorting the elements of $\hat{\boldsymbol{\mu}}$, resulting in $\mathcal{O}(m\log(m))$ time complexity. The above details complete the proof.
\end{proof}

\begin{algorithm}
	\caption{ Projection of $\hat{\boldsymbol{\mu}}$ onto a simplex $\Delta = \{\boldsymbol{\mu}|\mu_k \geq 0,\forall k\in[1,m], \sum_{k=1}^m \mu_k = \lambda\}$ to obtain $\boldsymbol{\mu}$. }  \label{algproj}
    \textbf{Input}: $\hat{\boldsymbol{\mu}}\in\mathbb{R}^{m}$, $\lambda$. \\
    \textbf{Output}: $\boldsymbol{\mu} = \Pi_\Delta\big(\hat{\boldsymbol{\mu}}\big)$.
	\begin{algorithmic}[1]
	    \State \parbox[t]{\dimexpr\linewidth-\algorithmicindent}
	    {Sort elements of $\hat{\boldsymbol{\mu}}$ into $\hat\mu_{(1)}\geq\hat\mu_{(2)}\geq\cdots\geq\hat\mu_{(m)}$.}\label{line1}
	    \State \parbox[t]{\dimexpr\linewidth-\algorithmicindent}
	    {Calculate $K = \max \big(\big\{i|\hat\mu_{(i)} - \frac{\sum_{k=1}^i   \hat\mu_{(k)}-\lambda}{i}>0\big\}\big)$.}\label{line2}
	    \State \parbox[t]{\dimexpr\linewidth-\algorithmicindent}
	    {Calculate $\beta = \frac{\sum_{k=1}^K   \hat\mu_{(k)}-\lambda}{K}$.}\label{line3}
	    \State \parbox[t]{\dimexpr\linewidth-\algorithmicindent}
	    {Calculate $\boldsymbol{\mu}$ by:
	    \begin{equation*}
	        \mu_k = 
                \begin{cases}
                    \hat\mu_k - \beta,& \text{if } \hat\mu_k > \beta,\\
                    0,              & \text{otherwise},
                \end{cases}
                \:\: \forall k \in [1,m].
	    \end{equation*}
	    }\label{line4}
	\end{algorithmic} 
\end{algorithm}

\subsubsection{The Backbone Model}
We employ a model which takes  the embedding $\mathbf{E}$, $\boldsymbol{\mu}^{(T)}$ and the dependencies matrix $\mathbf{W}^{(T)}$ as input. After $T$ update steps we have:
\begin{align}
    &f_{\boldsymbol{\theta}}\big(\mathbf{E},\mathbf{W}^{(T)},\boldsymbol{\mu}^{(T)}\big) =\label{bkb}\\ &\mathrm{MLP}\Big(\big[\langle\mathbf{e}_1,\mathbf{e}_2\rangle_{\boldsymbol{\Phi}^{1,2}},\cdots\!,\langle\mathbf{e}_{m-1},\mathbf{e}_m\rangle_{\boldsymbol{\Phi}^{m\!-\!1,m}},  \text{vec\big({\fontsize{7}{7}\selectfont $\frac{\boldsymbol{\mu}^{(T)}}{\lambda}\!\circ\!\mathbf{W}^{(T)}$}\big)}\big]\Big),\nonumber
\end{align}
%
where parameters $\boldsymbol{\theta}$ include parameters of MLP and set $\{\boldsymbol{\Phi}^{1,2},\cdots,\boldsymbol{\Phi}^{m\!-\!1,m}\}$. We use a three-layer MLP with 100 neurons in each layer and the ReLU activation function. 
$\langle\mathbf{a},\mathbf{b}\rangle_{\boldsymbol{\Phi}}
\triangleq  \mathbf{a}^\top\boldsymbol{\Phi}\mathbf{b}$
defines an operation between two vectors $\mathbf{a},\mathbf{b}\in\mathbb{R}^k$ and $\mathbf{\Phi}\in\mathbb{R}^{k\times k}$. Such a pair-wise product between $\mathbf{e}_i$ and $\mathbf{e}_j$, $\forall i,j\in[1,m]$ generates an $m(m-1)/2$ dimensional vector. $\mathbf{W}^{(T)}$ is multiplied by $\boldsymbol{\mu}^{(T)}/\lambda$ (each column by one corresponding coefficient of $\boldsymbol{\mu}^{(T)}/\lambda$) before being flattened by $\text{vec}(\cdot)$ to concatenate with the $m(m-1)/2$ dimensional vector forming the input to the MLP. In this way, the field dependencies are explicitly utilised in the model and $\boldsymbol{\mu}^{(T)}/\lambda$ helps adjust weight of each column in dependency matrix $\mathbf{W}^{(T)}$ similar to the attention mechanism. 

\subsubsection{Meta-Dependency Learning for the Supervised Task}
\label{metal}

Earlier, in \eqref{grdupdtw} and \eqref{qpp},  we have presented the update rules for the dependency matrix $\mathbf{W}^{(t)}$ and the weight vector $\boldsymbol{\mu}^{(t)}$ for each instance. In this section, we introduce our full meta-leaning algorithm which produces a global dependency matrix $\mathbf{W}^{(0)}$ and an embedding matrix $\mathbf{V}$. Firstly, we introduce the concept of Model-Agnostic Meta-Learning~\cite{finn2017model} (MAML) \cite{finn2017model} which is key to understanding our algorithm.

\begin{algorithm}[b]
	\caption{Conceptual Sketch of MAML \cite{finn2017model}.} \label{maml_basic}
    \textbf{Input}: Dataset $D$, number of steps $T$, outer step size $\gamma$,\\inner step size $\eta$.\\ 
    \textbf{Output}: Global model $\boldsymbol{\vartheta}$.
	\begin{algorithmic}[1]
	    \State {Initialise global model parameters $\boldsymbol{\vartheta}$.}
		\While {not done}
		    \State{Sample an individual task  $(\mathbf{x},\mathbf{y})\in D$;}
			\State Initialize task-specific parameters  $\boldsymbol{\vartheta}'\leftarrow\boldsymbol{\vartheta}$;
			\For {step $t=1,2,\ldots,T$}
				\State \parbox[t]{\dimexpr\linewidth-\algorithmicindent}
			{Update task-specific parameters:\\   $\qquad\qquad\;\;\boldsymbol{\vartheta}'\leftarrow\boldsymbol{\vartheta}'-\eta\nabla_{\boldsymbol{\vartheta}}\ell\big(g_{\boldsymbol{\vartheta}}(\mathbf{x}),\mathbf{y})\big)$;}
			\EndFor
			\State\parbox[t]{\dimexpr\linewidth-\algorithmicindent}
			{Update global model parameters $\boldsymbol{\vartheta}$:\\
            $\boldsymbol{\vartheta}\leftarrow\boldsymbol{\vartheta}-\gamma\nabla_{\boldsymbol{\vartheta}}\ell\big(g_{\boldsymbol{\vartheta}'}(\mathbf{x}),\mathbf{y})\big)$;
			}
		\EndWhile
	\end{algorithmic}
\end{algorithm}

MAML is often called a second-order optimisation algorithm, as it alternates between optimising the global classification model (so-called outer loop) and models of individual tasks (so-called inner loop). Algorithm \ref{maml_basic} shows a conceptual sketch of MAML: (i) The inner loop receives global model parameters $\boldsymbol{\vartheta}$ of feature encoder $g_{\boldsymbol{\vartheta}}(\mathbf{x})$ (encodes sample $\mathbf{x}$) and  adapts $\boldsymbol{\vartheta}$ to parameters $\boldsymbol{\vartheta}'$ of an individual task. (ii) The outer loop receives features of individual tasks from the feature encoder $g_{\boldsymbol{\vartheta}}(\mathbf{x})$, encoded with individual task-specific parameters $\boldsymbol{\vartheta}'$, and the global parameters $\boldsymbol{\vartheta}$ are updated so that they represent individual tasks well.

In what follows, we build on the above concept to develop our Meta-Dependency Learning (\textbf{MDL}) algorithm. Firstly, we note that MDL in Algorithm \ref{algo} differs from MAML in Algorithm \ref{maml_basic}, \ie, our inner loop performs unsupervised tasks with the goal of informing the target supervised classification task (outer loop). Such a design is an entirely novel proposition. Moreover, MAML acts on parameters $\boldsymbol{\vartheta}$ and $\boldsymbol{\vartheta}'$ of feature encoder $g_{\boldsymbol{\vartheta}}(\mathbf{x})$ of some input samples $\mathbf{x}$ (\eg, images). In contrast, the inner loop of our algorithm performs the refinement of the dependency matrix $\mathbf{W}^{(0)}$ towards individual instances captured by $\mathbf{W}^{(t)},t\geq1$, based on the dependency loss. The outer loop performs the update of the global dependency matrix $\mathbf{W}^{(0)}$, the embedding matrix $\mathbf{V}$ and the  parameters $\boldsymbol{\theta}$ of the backbone model $f_{\boldsymbol{\theta}}(\cdot)$. 
{\ZB We considered learning both $\mathbf{W}^{(0)}$ and $\boldsymbol{\mu}^{(0)}$ with MDL with either same or different learning rates, but we found the results were worse in such a case. This is likely due to the non-smooth projection $\Pi_\Delta(\cdot)$ making it harder to train an optimal $\boldsymbol{\mu}^{(0)}$. Including $\boldsymbol{\mu}^{(0)}$ in MDL could also encourage model overfitting. Setting $\boldsymbol{\mu}^{(0)}$ to be a constant vector is equivalent to assuming a uniform distribution prior over $\boldsymbol{\mu}^{(t)}$, which empirically turns out to be the better choice.} {\ZBL Note that the projection $\Pi_w(\cdot)$ will not complicate the optimisation of $\mathbf{W}^{(0)}$ because the projection only works on diagonal elements by setting them to $-1$  (constant by design). Alternatively, one may exclude the elements of diagonal from optimisation. The diagonal elements and projection were introduced by us so that we can organise the elements $\{w_{ik}: i \neq k \}$ into off-diagonal elements of the matrix $\mathbf{W}^{(0)}$ for simplicity of notation and implementation.}

\begin{figure}[t]
\vspace{-0.3cm}
  \centering
  \includegraphics[width=0.7\columnwidth]{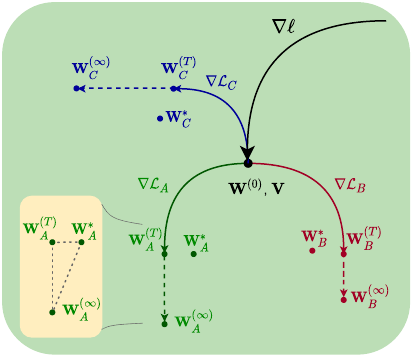}
  \caption{Diagram of MDL on three instances (A, B, and C). The task-specific loss $\ell$ is optimised \wrt the global dependency matrix $\mathbf{W}^{(0)}$ and the embedding matrix $\mathbf{V}$.  $\mathbf{W}^{(0)}$ is refined towards individual instance-wise optima ($\mathbf{W}_A^*$ , $\mathbf{W}_B^*$ , $\mathbf{W}_C^*$) by the dependency loss terms ($\mathcal{L}_A$, $\mathcal{L}_B$, $\mathcal{L}_C$), which  helps lower the task-specific loss $\ell$.  However, the optima ($\mathbf{W}_A^{\infty}$, $\mathbf{W}_B^{\infty}$, $\mathbf{W}_C^{\infty}$) obtained on the dependency loss are unlikely to coincide with the true optima ($\mathbf{W}_A^*$ , $\mathbf{W}_B^*$ , $\mathbf{W}_C^*$) for the task-specific loss $\ell$ due to overfitting and/or the different nature of both objectives. Thus, a ``good'' finite number of $T$ steps of the gradient descent on the dependency loss are required to ensure that intermediate dependency matrices ($\mathbf{W}_A^{T}$, $\mathbf{W}_B^{T}$, $\mathbf{W}_C^{T}$) get close to the true unknown dependency matrices ($\mathbf{W}_A^*$ , $\mathbf{W}_B^*$ , $\mathbf{W}_C^*$), which could be obtained if the dependency loss was replaced with the task-specific loss that has access to labels (we have no access to labels during testing). As an example, the yellow panel illustrates that $\|\mathbf{W}_A^*-\mathbf{W}_A^{(T)}\|_F<\|\mathbf{W}_A^*-\mathbf{W}_A^{(\infty)}\|_F$ for $1\leq T\ll\infty$.
  }\label{meta-learn}.
\end{figure}

\begin{algorithm}[b]
	\caption{Meta-Dependency Learning (Training).} \label{algo}
    \textbf{Input}: Dataset $D$, step size $\eta$ and $\gamma$, number of steps $T$.\\ 
    \textbf{Output}: Model parameters ${\boldsymbol{\theta}}$, embedding matrix $\mathbf{V}$, global dependency matrix $\mathbf{W}^{(0)}$.
	\begin{algorithmic}[1]
	    \State {Initialise model parameters $\boldsymbol{\theta}$, embedding matrix $\mathbf{V}$, global dependency matrix $\mathbf{W}^{(0)}$, weight coefficients $\boldsymbol{\mu}^{(0)}=\frac{\lambda}{m}\mathbf{1}_m$.}
		\While {not done}
		    \State \parbox[t]{\dimexpr\linewidth-\algorithmicindent}
		    {Sample an instance $(\mathbf{x},\mathbf{y}) \in D$;}
		    \State{Generate the embedding $\mathbf{E}$ with $\mathbf{V}$ by \eqref{ebd};}
			\For {step $t=1,2,\ldots,T$}
				\State Calculate $\mathbf{W}^{(t)}$ by \eqref{grdupdtw} with step size $\eta$;
				\State Calculate $\boldsymbol{\mu}^{(t)}$ by \eqref{grdupdtmu} with step size $\eta$;
			\EndFor
			\State \parbox[t]{\dimexpr\linewidth-\algorithmicindent}
			{Update  model parameters $\boldsymbol{\theta}$, embedding matrix $\mathbf{V}$, and global dependency matrix $\mathbf{W}^{(0)}$:
			\begin{align*}
			\boldsymbol{\theta} &\leftarrow  \boldsymbol{\theta} - \gamma\nabla_{\boldsymbol{\theta}} \ell\big(f_{\boldsymbol{\theta}}(\mathbf{E},\mathbf{W}^{(T)},\boldsymbol{\mu}^{(T)}), \mathbf{y}\big);\\
			\mathbf{V} &\leftarrow \mathbf{V} - \gamma\nabla_{\mathbf{V}} \ell\big(f_{\boldsymbol{\theta}}(\mathbf{E},\mathbf{W}^{(T)},\boldsymbol{\mu}^{(T)}), \mathbf{y}\big);\\ 
			\mathbf{W}^{(0)} &\leftarrow \Pi_w\Big( \mathbf{W}^{(0)} \!- \gamma\nabla_{\mathbf{W}^{(0)}} \ell\big(f_{\boldsymbol{\theta}}(\mathbf{E},\mathbf{W}^{(T)},\boldsymbol{\mu}^{(T)}), \mathbf{y}\big)\Big);
			\end{align*}
			}\label{algostep9} 
		\EndWhile
	\end{algorithmic}
\end{algorithm}

\begin{algorithm}[b]
	\caption{Meta-Dependency Learning (Inference).} \label{algotest}
    \textbf{Input}: Testing instance $\mathbf{x}$; step size $\eta$, number of steps $T$, model parameters  ${\boldsymbol{\theta}}$, embedding matrix $\mathbf{V}$, global dependency matrix $\mathbf{W}^{(0)}$.\\
    \textbf{Output}: Prediction $\hat y$.
	\begin{algorithmic}[1]
	\State Initialise weight coefficients $\boldsymbol{\mu}^{(0)}=\frac{\lambda}{m}\mathbf{1}_m$.
	    \State \parbox[t]{\dimexpr\linewidth-\algorithmicindent}
	    {Generate the embedding $\mathbf{E}$ with $\mathbf{V}$ by \eqref{ebd};}
		\For {step $t=1,2,\ldots,T$}
			\State Calculate $\mathbf{W}^{(t)}$ by \eqref{grdupdtw} with step size $\eta$;
			\State Calculate $\boldsymbol{\mu}^{(t)}$ by \eqref{grdupdtmu} with step size $\eta$;
		\EndFor
 		\State \parbox[t]{\dimexpr\linewidth-\algorithmicindent}
 		{$\hat y = f_{\boldsymbol{\theta}}\big(\mathbf{E},\mathbf{W}^{(T)},\boldsymbol{\mu}^{(T)}\big);$}
	\end{algorithmic}
\end{algorithm}

For simplicity, we explain the algorithm using classification as the supervised task. 
Let a training instance be   $(\mathbf{x},\mathbf{y})\in D$, where $D\sim\mathcal{D}\subset\mathcal{X}\times\mathcal{Y}$ is a dataset sampled from the data distribution $\mathcal{D}$, whereas  $\mathcal{X}$ and $\mathcal{Y}$ define  an input  and output space respectively. 
Let $f_{\boldsymbol{\theta}}(\cdot)$ be  a function parameterised by $\boldsymbol{\theta}$ taking the embedding $\mathbf{E}$, dependency matrix $\mathbf{W}^{(T)}$ and weight vector $\boldsymbol{\mu}^{(T)}$ as inputs. Then, given a loss function $\ell(\cdot)$, the learning objective on the underlying data distribution $\mathcal{D}$ is given by:
\begin{equation}\label{expctedloss}
    \min_{\boldsymbol{\theta},\mathbf{V},\mathbf{W}^{(0)}} \mathbb{E}_{(\mathbf{x},\mathbf{y}) \sim \mathcal{D}}\big[\ell(f_{\boldsymbol{\theta}}(\mathbf{E},\mathbf{W}^{(T)},\boldsymbol{\mu}^{(T)}), \mathbf{y})\big],
\end{equation}
In practice, we will use the average loss over a batch of instances sampled from dataset $D$ to approximate the expected loss over the data distribution $\mathcal{D}$ in \eqref{expctedloss}.

In the ideal case, for an instance sampled from dataset $D$, 
optimising the dependency loss $\mathcal{L}$   leads to $\mathbf{W}^{(T)}$ after $T$ steps, coinciding with optimising $\mathbf{W}$ \wrt the classification loss $\ell$ directly (without considering the dependencies). That means we can optimise $\mathcal{L}$ as a surrogate instead of optimising $\ell$ directly. Thus, we can also optimise the dependency loss for each unlabelled instance of test data, expecting this will lead to lower value of $\ell$. To this end, Meta-Dependency Learning 
finds a good embedding matrix $\mathbf{V}$ and global dependency matrix $\mathbf{W}^{(0)}$ such that adjusting the dependency matrix $\mathbf{W}^{(0)}$ on the embedding $\mathbf{E}$ towards a smaller dependency loss $\mathcal{L}$ will reduce the classification loss $\ell$.  

To put it in simply, the global dependency matrix $\mathbf{W}^{(0)}$ should model the dependencies between fields well enough to be a starting point for the refinement step, and the refinement step should help improve the global dependency matrix. Updating $\mathbf{W}^{(0)}$ for each instance helps capture finer and more useful dependencies for the supervised task even on unseen data. The embedding matrix $\mathbf{V}$  helps generate an informative  embedding $\mathbf{E}$ for each instance, thus capturing the beneficial dependencies for the  supervised task.

{\ZB There are two main advantages of using MDL over using similarity between embeddings: i) the similarity between embeddings is typically calculated pair-wise between fixed embeddings of features, while the dependencies of MDL jointly consider all features of an instance; ii) the MDL enables the model to  tune the embeddings instance-wise following dependencies in testing stage without labels, as it connects dependencies with the loss (of classifier).}

Fig.~\ref{framework}, and Algorithms~\ref{algo} and \ref{algotest} provide detailed steps of MDL. Fig.~\ref{meta-learn} explains how the dependency loss and the task-specific  loss interact with each other in MDL.

\subsubsection{Optimisation} Modern optimisation toolkits provide the automatic differentiation~\cite{abadi2016tensorflow,paszke2019pytorch} and can handle the adaptive learning rate. In practice, we  use such functionalities in step \ref{algostep9} of Algorithm~\ref{algo}. Nonetheless, thanks to the simple form of the dependency loss, its derivative can be explicitly used to avoid the automatic differentiation in the inner loop of Algorithm~\ref{algo}. Therefore, only one forward pass and one backward pass are required per one parameter update step. Moreover, the proposed algorithms can also process mini-batches to speed up training/testing. 

\subsubsection{Complexity Analysis} Let $n$, $m$, $k$ and $T$ denote the size of mini-batch, number of fields, embedding dimension and gradient descent steps for the dependency loss, respectively. The gradient descent updates \eqref{grdupdtw} and \eqref{grdupdtmu} for dependency learning require $\mathcal{O}(Tnkm^2)$ time, and  $\mathcal{O}(m^2)$ parameters in addition to the MLP.  The embedding in \eqref{ebd} can be computed fast through indexing. Given a reasonable choice of $k$ and the fact that $m$ and $T$ are usually small, learning the dependencies introduces a negligible computational time and space in additional to the cost of backbone $f_{\boldsymbol{\theta}}$. 

\begin{table*}[!h]
\caption{Statistics of benchmark datasets.}
\label{stsd}
\centering
\begin{tabular}{@{}llllllll@{}}
\toprule
Dataset   & \#instances & \#train & \#validation & \#test          & \#positive labels & \#features & \#fields \\ \midrule
KDD2012   & 149,639,105 & 95,764,025 & 23,941,007 & 29,934,073 & 6,658,036         & 2,047,519  & 11       \\
Avazu     & 40,428,967  & 32,343,173 & 4,042,896 & 4,042,898   & 6,865,066         & 2,018,012  & 22       \\
Criteo    & 45,840,617  & 36,672,493 & 4,584,061 & 4,584,063   & 11,745,438        & 395,894    & 39       \\
ML-Tag    & 2,006,859   & 1,605,487 & 200,685 & 200,687        & 668,953           & 90,445     & 3        \\
ML-Rating & 1,000,209   & 800,167 & 100,020 & 100,022          & 575,281           & 13,461     & 6        \\
Frappe    & 288,609     & 230,887 & 28,860 & 28,862            & 96,203            & 5,382      & 10       \\ \bottomrule
\end{tabular}
\end{table*}

\section{Related Works/Alternative Approaches}
{\ZB Categorical data often come in as tables, thus being a natural fit to methods dealing with tabular data~\cite{liu2020dnn2lr,liu2021mining,Popov2020Neural,arik2021tabnet,huang2020tabtransformer,somepalli2022saint}. These methods have drawn much attention in recent years. They usually consider a more general case where data consist of both numerical and categorical types, but fall short of considering the sparsity issue in large-scale categorical data, where there could be millions of categorical features observed sparsely. Corresponding benchmarks were often conducted on datasets with smaller number of features~\cite{shwartz2022tabular,gorishniy2021revisiting}. The sparseness of categorical features requires specific design to learn appropriate embeddings, \eg, as in the Factorization Machine (FM)~\cite{rendle2010factorization}.} 

FM and its variants~\cite{blondel2016higher,blondel2015convex,pan2018field} are arguably some of the most famous methods for classification/regression or ranking of categorical data due to their advantages in handling the sparse format of input. FM models typically use second-order polynomial modelling where pairs of features are weighted by the inner product of  corresponding pairs of embedding vectors. Compared to learning embeddings that are directly fed into a neural network, associating embedding vectors with coefficients of the polynomial model works better alleviating the sparsity issue. Such embedding-based models have been extended to neural network models~\cite{lu2020dual,yu2019input,xiao2017attentional,wang2020attention,cheng2020adaptive,he2017neural,guo2017deepfm,lian2018xdeepfm,hong2019interaction}.

Our MDL method utilizes the dependencies to assist the learning of embedding vectors which is completely different from polynomial modelling commonly adopted in FM-related models. The dependencies are refined in a localised way to better capture the variance of each instance.  There also exist other recent models such as a linear mixed model~\cite{liang2020lmlfm} and locally-linear models~\cite{liu2017locally,chen2019generalized} which handle the categorical data in an instance-adaptive manner. {\ZBL The specific structure of categorical data has  inspired several related methods~\cite{juan2016field,pan2018field,wang2020attention,qu2018product,chen2019rafm,li2020field} 
which use the field information  to parameterise the model in various ways. However, the dependencies between fields are overlooked in above-motioned models. 
Our MDL method exploits the sparse structure of categorical data and captures dependencies between categorical features by the auxiliary loss (dependency loss). This mechanism is used conjointly with the task-driven loss (supervised loss) applied to the target learning task. In contrast, the above models simply learn to predict the target from categorical features. 

Specifically, Field-aware Factorization Machine (FFM) \cite{juan2016field} uses an embedding vector per categorical feature per field. 
Field-weighted Factorization Machine \cite{pan2018field} uses different weights for different pairs of fields.
 Attention-over-attention Field-aware Factorization Machine \cite{wang2020attention} builds upon FFM by introducing an attention mechanism. 
Product-based Neural Networks \cite{qu2018product}  use differently weighted inner products of embedding vectors from different pairs of fields. 
Rank-aware Factorization Machine (RaFM) \cite{chen2019rafm} uses embeddings of different dimensionality based on the count of categorical features under each field (in contrast to the  embeddings with one fixed dimensionality).
Field-wise Learning (FWL) \cite{li2020field} uses embeddings with separate model parameters per field. 
}

The dependency learning we explore in this paper is a form of unsupervised learning, which is motivated by Multiple Imputation by Chained Equations (MICE)~\cite{azur2011multiple}, which masks out features followed by the imputation strategy.
{\ZB Some works~\cite{somepalli2022saint,wang2020attention,xiao2017attentional} use the attention mechanism which differs from the field dependencies we use, in the sense that the attention is not necessarily or explicitly related to the dependencies between fields.} 
Some works \cite{yoon2020vime,ucar2021subtab} also perform imputation on tabular data. They focus on self- and semi-supervised learning where imputation is applied globally as a regularisation. However, categorical data models with regularisation perform poorly  as shown in our empirical study. Obtaining categorical features, imputed through  classification, is also computationally prohibitive for high-dimensional categorical data. Some natural language processing models~\cite{zheng2014topic,zheng2015deep} also exploit inference of sequence embeddings from each other.

Our method is also in part inspired by the ability of Model-Agnostic Meta-Learning~\cite{finn2017model} (MAML) to rapidly adapt to new tasks. MAML and its variants are often called second-order optimisation models as they use non-overlapping instances  within the inner loop  and the outer loop of meta-learning, which result in second-order gradients that improve the generalisation ability of MAML. Meta-learning is often used with few-shot inductive   \cite{protonet, relationnet,simon2022meta,Shi2022Accurate3C,hongguangACCV} or transductive  \cite{lee2019meta,zhu2022ease,zhu2023protolp} classification, few-shot object detection \cite{zhang2020sopaccv,zhang2022kernelized,zhang2022time}, segmentation \cite{shaban2017oslsm,dahyun2023seg}, action  \cite{udtw_eccv22,Wang_2022_ACCV} and keypoint recognition \cite{lu2022few}. The research on the inductive bias of meta-learning is ongoing~\cite{lu2021towards} but the target models required in such an analysis are often not available. Such an issue can  somewhat be addressed by meta-learning an unsupervised weight update rule~\cite{metz2018meta}, or by leveraging  knowledge distillation from pre-trained models~\cite{lu2021towards}. These methods however require bespoke design for different data types and are generally not applicable to categorical data. In contrast, we tailor the idea of MAML to categorical data by leveraging the field dependencies, which is an entirely novel proposition.

\section{Experiments}
In this section, we present extensive experiments to verify the effectiveness of MDL. We also study the correspondence between the dependency loss and the task-specific (classification) loss, and provide numerous ablations. We choose several popular models as baselines. 

\subsection{Datasets and Preprocessing}
\label{datapre}
We use the following publicly available datasets dedicated to various applications, all of which can be formulated as binary classification or regression tasks.

\vspace{0.1cm}
\noindent\textbf{Avazu}\footnote{\url{https://www.kaggle.com/c/avazu-ctr-prediction/data}.}. This dataset contains clicks  on mobile ads; all features are categorical. The aim is to predict the click behaviour of users with user and ad attributes. The labels are 0/1 for non-clicked/clicked records. We use the same train/validation/test sets and preprocessing steps as \citet{li2020field}. For each field, features occurring less than 4 times were all aggregated into a single feature.

\vspace{0.1cm}
\noindent\textbf{Criteo}\footnote{\url{https://www.kaggle.com/c/criteo-display-ad-challenge}.}. This dataset focuses on the question: given a user and the page they are visiting, what is the probability that they will click on a given ad? Each instance in this dataset corresponds to a display ad served by Criteo. This problem is often interpreted as a binary classification task with the label 0/1 for non-clicked/clicked records. There are 13 columns of integer features (mostly count features) and 26 columns of categorical features. The semantic of the features is undisclosed. We use the same train/validation/test sets and preprocessing steps as \citet{li2020field}. Integer features were transformed into categorical features by binning following log-transformation. Features appearing less than 35 times were grouped as one feature in corresponding fields.

\begin{table*}[httb!]
\caption{Results on KDD2012, Avazu and Criteo. Asterisk `*' means the statistical significance of MDL \wrt the given baseline.}
\vspace{-0.4cm}
\setlength{\tabcolsep}{3pt}
\label{expl}
\centering
\small
\begin{tabular}{@{}llccccccccccc@{}}
\toprule
Method & & \multicolumn{3}{c}{KDD2012} &  & \multicolumn{3}{c}{Avazu} &  & \multicolumn{3}{c}{Criteo} \\ \cmidrule(lr){3-5} \cmidrule(lr){7-9} \cmidrule(l){11-13} 
\multicolumn{1}{c}{} &          & Logloss & AUC    & \#params &  & Logloss & AUC    & \#params &  & Logloss & AUC    & \#params \\ \midrule
LR      &                       & 0.1560$^{*}$ & 0.7865$^{*}$ & 2.05M    &  & 0.3819$^{*}$  & 0.7763$^{*}$ & 2.02M    &  & 0.4561$^{*}$  & 0.7941$^{*}$ & 0.40M    \\
FM &\cite{rendle2010factorization}           & 0.1523$^{*}$  & 0.8011$^{*}$  & 126.40M &  & 0.3770$^{*}$  & 0.7854$^{*}$ & 201.80M &  & 0.4420$^{*}$  & 0.8082$^{*}$  & 32.07M \\
FFM &\cite{juan2016field}       & 0.1529$^{*}$  & 0.8000$^{*}$& 47.09M   &  & 0.3737$^{*}$   & 0.7915$^{*}$ & 341.04M  &  & 0.4413$^{*}$ & 0.8107$^{*}$ & 60.57M   \\
xDFM &\cite{lian2018xdeepfm}    & 0.1516$^{*}$  & 0.8053$^{*}$ & 247.10M  &  & 0.3727$^{*}$   & 0.7915$^{*}$ & 83.10M   &  & 0.4406$^{*}$  & 0.8112$^{*}$ & 4.68M    \\
AFN &\cite{cheng2020adaptive}   & 0.1513$^{*}$  & 0.8071$^{*}$ & 264.89M  &  & 0.3754$^{*}$   & 0.7886$^{*}$ & 45.42M   &  & 0.4395$^{*}$ & 0.8125$^{*}$ & 17.55M   \\
IFM &\cite{yu2019input}         & 0.1501$^{*}$  & 0.8115$^{*}$ & 246.78M  &  & 0.3733$^{*}$   & 0.7905$^{*}$ & 82.74M   &  & 0.4403$^{*}$  & 0.8114$^{*}$ & 8.31M    \\
INN & \cite{hong2019interaction} & 0.1581$^{*}$ & 0.7924$^{*}$ & 66.21M   &  & 0.3816$^{*}$   & 0.7761$^{*}$ & 22.20M   &  & 0.4447$^{*}$  & 0.8067$^{*}$ & 4.36M    \\
LLFM & \cite{liu2017locally}     & 0.1522$^{*}$  & 0.8019$^{*}$ & 264.84M  &  & 0.3767$^{*}$   & 0.7862$^{*}$ & 532.75M  &  & 0.4425$^{*}$  & 0.8094$^{*}$ & 52.25M   \\
RaFM & \cite{chen2019rafm}       & 0.1540$^{*}$  & 0.7943$^{*}$ & 187.45M  &  & 0.3741$^{*}$   & 0.7903$^{*}$ & 87.27M   &  & 0.4415$^{*}$  & 0.8105$^{*}$ & 70.79M   \\
FWL & \cite{li2020field}         & 0.1515$^{*}$  & 0.8075$^{*}$ & 698.21M  &  & 0.3714$^{*}$   & 0.7946$^{*}$ & 357.18M  &  & 0.4390$^{*}$  & 0.8130$^{*}$ & 206.65M  \\
\rowcolor{LightCyan} MDL  & ours                          & \textbf{0.1494}  & \textbf{0.8146} & 247.15M  &  & \textbf{0.3686}  & \textbf{0.7981} & 83.47M   &  & \textbf{0.4375}  & \textbf{0.8145} & 9.23M    \\ \bottomrule
\end{tabular}
\end{table*}

\begin{table*}
\centering
\caption{Results on ML-Tag, Frappe and ML-Rating. Asterisk `*' means the statistical significance of MDL \wrt the given baseline.}
\vspace{-0.4cm}
\setlength{\tabcolsep}{3pt}
\label{expm}
\small
\begin{tabular}{@{}llcccccccccc@{}}
\toprule
Method &                                & \multicolumn{3}{c}{ML-Tag}                  &  & \multicolumn{3}{c}{Frappe}                  &  & \multicolumn{2}{c}{ML-Rating} \\ \cmidrule(lr){3-5} \cmidrule(lr){7-9} \cmidrule(l){11-12} 
       &                                & Logloss         & AUC             & \#params &  & Logloss         & AUC             & \#params &  & MSE               & \#params   \\ \midrule
LR     &                                & 0.3069$^*$      & 0.9288$^*$      & 90446   &  & 0.3111$^*$      & 0.9318$^*$      & 5383    &  & 0.8152$^*$            & 13462     \\
FM     & \cite{rendle2010factorization} & 0.2114$^*$      & 0.9606$^*$      & 7.32M   &  & 0.1249          & 0.9834$^*$      & 0.44M   &  & 0.7406$^*$            & 0.15M     \\
FFM    & \cite{juan2016field}           & 0.2055$^*$      & 0.9642$^*$      & 4.43M   &  & 0.1352$^*$      & 0.9831$^*$      & 0.87M   &  & 0.7557$^*$            & 0.82M     \\
xDFM   & \cite{lian2018xdeepfm}         & 0.2220$^*$      & 0.9632$^*$      & 3.73M   &  & 0.1388$^*$      & 0.9833$^*$      & 0.27M   &  & 0.7497$^*$            & 0.58M     \\
AFN    & \cite{cheng2020adaptive}       & 0.2287$^*$      & 0.9592$^*$      & 9.72M   &  & 0.1313$^*$          & 0.9847$^*$          & 6.24M   &  & 0.7583$^*$            & 1.67M     \\
IFM    & \cite{yu2019input}             & 0.2448$^*$      & 0.9494$^*$      & 3.70M   &  & 0.1322$^*$          & 0.9837$^*$      & 0.22M   &  & 0.7417$^*$            & 0.15M     \\
INN    & \cite{hong2019interaction}     & 0.2524$^*$      & 0.9520$^*$      & 1.00M   &  & 0.2052$^*$      & 0.9744$^*$      & 0.06M   &  & 0.7397$^*$            & 0.15M     \\
LLFM   & \cite{liu2017locally}          & 0.2293$^*$      & 0.9567$^*$      & 15.19M  &  & 0.1330$^*$          & 0.9826$^*$      & 0.45M   &  & 0.7373$^*$            & 0.65M     \\
RaFM   & \cite{chen2019rafm}            & 0.2249$^*$      & 0.9589$^*$      & 6.11M   &  & \textbf{0.1199} & 0.9848$^*$          & 0.67M   &  & 0.7547$^*$            & 2.26M     \\
FWL    & \cite{li2020field}             & 0.2131$^*$      & 0.9627$^*$      & 3.35M   &  & 0.1326$^*$      & 0.9839$^*$      & 0.43M   &  & 0.7545$^*$            & 0.31M     \\
\rowcolor{LightCyan} MDL  & ours                             & \textbf{0.1843} & \textbf{0.9691} & 7.36M   &  & 0.1219          & \textbf{0.9860} & 0.32M   &  & \textbf{0.7263}   & 0.30M     \\ \bottomrule
\end{tabular}
\end{table*}

\vspace{0.1cm}
\noindent\textbf{KDD2012}\footnote{\url{https://www.kaggle.com/c/kddcup2012-track2/overview}.}. Given the training instances derived from session logs of a search engine, the goal is to predict clicked/non-clicked ad impressions, \ie,  given the features of both an ad and a user that appear in a search session log, one has to predict whether a user will click an ad. All features are categorical. We used the ``training.txt'' file of the second track in KDD CUP 2012 and set the label to be 1 if the number of clicks is non-zero, and 0 otherwise. We use an 8:2 split for training/testing with a 20\% holdout set taken from the train set for validation.  The dataset split is the same as in ~\cite{juan2016field}, but the results are different as 1) we use different preprocessing steps (for each field, features occurring less than 10 times were all aggregated into a single feature) for better scalability of all baseline methods, and 2) we do not retrain the model combining the training and validation data after completing the hyper-parameter search.

\vspace{0.1cm}
\noindent\textbf{ML-Tag}\footnote{\parbox[t]{\dimexpr\linewidth-\algorithmicindent}{\url{https://github.com/hexiangnan/neural\_factorization\_machine/tree/master/data/ml-tag}.}}. This dataset, from \citet{he2017neural} and GroupLens \cite{harper2015movielens}, is dedicated to predicting if a user will assign a tag to a movie with fields including {\em user id}, {\em movie id} and {\em tag}.

\vspace{0.1cm}
\noindent\textbf{Frappe}\footnote{\parbox[t]{\dimexpr\linewidth-\algorithmicindent}{\url{https://github.com/hexiangnan/neural\_factorization\_machine/tree/master/data/frappe}.}}. This dataset, from \citet{he2017neural} and \citet{baltrunas2015frappe}, includes app usage logs from users. 
The task,  context-aware mobile app recommendation, is to predict whether a user will use a mobile app under certain conditions. The labels are 0/1 for unused/used category respectively. All features are categorical including {\em user id}, {\em weather}, \etc.

\vspace{0.1cm}
\noindent\textbf{ML-Rating}\footnote{\url{https://grouplens.org/datasets/movielens/1m}.}. This is the MovieLens 1M Dataset from GroupLens~\cite{harper2015movielens}. We study the task of predicting the user feedback on movies. Ratings use  5-star scale. We combine the user attributes and {\em movie id} as features, and formulate the task as a regression task predicting the users' ratings of movies. 

\vspace{0.1cm}
For Avazu, Criteo, and KDD2012 datasets we follow the splits in other papers as specified above. For the rest of datasets, we randomly split each of them into train (80\%), validation (10\%), and test (10\%) sets. The statistics of the datasets after preprocessing are presented in Table~\ref{stsd}.

\subsection{Evaluation Metrics}
In what follows, we use Logloss (\ie, binary cross-entropy loss), AUC (Area Under the ROC curve) and MSE (Mean Squared Error) as our evaluation criteria where they are applicable. 

We note that a small increase in AUC or an improvement of 0.001 for Logloss is generally considered to be significant on the above datesets~\cite{cheng2016wide,wang2017deep}. We also conduct statistical analyses to test for differences between the mean metrics obtained from our method and baselines.  For each dataset/baseline/metric, we conduct a one-way Analysis of Variance (ANOVA) with the methods considered as so-called treatments.  Pair-wise differences between the treatment mean for baselines incorporating our method and those from purely baselines are assessed by planned contrasts using Dunnett's test \cite{Dunnett1955} (testing for significance and controlling the family-wise Type I error under multiple hypothesis tests) with a significance level of ($p = 0.05$).

\subsection{Baselines, Hyper-Parameter Settings and Implementation Details}
\label{runt}
We compare our method to the following three, non-exclusive types of baselines with open source implementation on all the datasets:
1) methods that model only lower-order feature interactions, including Logistic Regression (LR), FM~\cite{rendle2010factorization}, FFM~\cite{juan2016field} and 2) methods that utilise the structure of fields or categorical data, including FWL~\cite{li2020field}, RaFM~\cite{chen2019rafm}, FFM, INN~\cite{hong2019interaction}, xDFM~\cite{lian2018xdeepfm} and 3) methods that use adaptive learning scheme, such as  AFN~\cite{cheng2020adaptive} (adaptive order), LLFM~\cite{liu2017locally} and IFM~\cite{yu2019input} (adaptive embedding). {\ZB On the four large-scale datasets, KDD2012, Avazu, Criteo and ML-Tag, we also compare our method with more diverse baselines including methods such as W\&D (Wide\&Deep)~\cite{cheng2016wide}, DeepFM~\cite{guo2017deepfm} for categorical data, NODE~\cite{Popov2020Neural}, TabNet~\cite{arik2021tabnet},
TabTr~\cite{huang2020tabtransformer}, SAINT~\cite{somepalli2022saint} for tabular data, and the recently proposed FMLP~\cite{kelongfinalmlp2023} used in CTR prediction. }

\begin{table*}
\centering
{\ZB
\setlength{\tabcolsep}{2pt}
\small
\caption{Additional results on large datasets. Asterisk `*' means the statistical significance of MDL \wrt the given baseline.}
\label{additional_results}
\begin{tabular}{@{}llccrrccrrccrrccr@{}}
\toprule
 &  & \multicolumn{3}{c}{KDD2012} &  & \multicolumn{3}{c}{Avazu} &  & \multicolumn{3}{c}{Criteo} &  & \multicolumn{3}{c}{ML-Tag} \\ \cmidrule(lr){3-5} \cmidrule(lr){7-9} \cmidrule(lr){11-13} \cmidrule(l){15-17} 
Method &  & Logloss & AUC & \#params &  & Logloss & AUC & \#params &  & Logloss & AUC & \#params &  & Logloss & AUC & \#params \\ \midrule
W\&D &\cite{cheng2016wide}& 0.1510$^*$ & 0.8073$^*$ & 66.23M &  & 0.3747$^*$ & 0.7887$^*$ & 82.83M &  & 0.4412$^*$ & 0.8100$^*$ & 8.40M &  & 0.2462$^*$ & 0.9541$^*$ & 3.73M \\
DeepFM &\cite{guo2017deepfm}& 0.1512$^*$ & 0.8065$^*$ & 66.23M &  & 0.3747$^*$ & 0.7892$^*$ & 82.83M &  & 0.4427$^*$ & 0.8092$^*$ & 4.40M &  & 0.2281$^*$ & 0.9602$^*$ & 3.73M \\
NODE &\cite{Popov2020Neural}& 0.1567$^*$ & 0.7837$^*$ & 0.20M &  & 0.3884$^*$ & 0.7651$^*$ & 0.24M &  & 0.4685$^*$ & 0.7796$^*$ & 0.29M &  & 0.2742$^*$ & 0.9423$^*$ & 0.18M \\
TabNet &\cite{arik2021tabnet}& 0.1601$^*$ & 0.7619$^*$ & 121M &  & 0.3780$^*$ & 0.7834$^*$ & 80.9M &  & 0.4597$^*$ & 0.7995$^*$ & 7.9M &  & 0.2827$^*$ & 0.9373$^*$ & 7.3M \\
TabTr &\cite{huang2020tabtransformer}& 0.1561$^*$ & 0.7833$^*$ & 145M &  & 0.3769$^*$ & 0.7854$^*$ & 81.4M &  & 0.4439$^*$ & 0.8078$^*$ & 9.7M &  & 0.2651$^*$ & 0.9473$^*$ & 9.0M \\
SAINT &\cite{somepalli2022saint}& 0.1537$^*$ & 0.7943$^*$ & 849.93M &  & 0.3817$^*$ & 0.7776$^*$ & 288.86M &  & 0.4451$^*$ & 0.8065$^*$ & 67.05M &  & 0.2864$^*$ & 0.9367$^*$ & 20.82M \\
FMLP &\cite{kelongfinalmlp2023}& 0.1508$^*$ & 0.8069$^*$ & 458.05M &  & 0.3737$^*$ & 0.7901$^*$ & 81.15M &  & 0.4452$^*$ & 0.8064$^*$ & 565.23M &  & 0.2252$^*$ & 0.9633$^*$ & 7.42M \\
\rowcolor{LightCyan} MDL & ours & \textbf{0.1494} & \textbf{0.8146} & 247.15M &  & \textbf{0.3686} & \textbf{0.7981} & 83.47M &  & \textbf{0.4375} & \textbf{0.8145} & 9.23M &  & \textbf{0.1843} & \textbf{0.9691} & 7.36M \\ \bottomrule
\end{tabular}}
\end{table*}

For all the baselines and our method, we chose hyper-parameters from a reasonably large grid of points and selected those which led to the smallest Logloss/MSE  on the validation sets. 
We applied the early-stopping strategy based on the validation sets for all models. We chose optimiser from Adagrad~\cite{duchi2011adaptive} and Adam~\cite{adamKingmaB14}. The batch size was fixed to 2048. All experiments were performed on machines using one NVIDIA Tesla P100 GPU and 64GB memory. Detailed runtimes of our method can be found in Appendices.

\subsection{Experimental Results}
\label{expsec}
The experimental results are presented in Tables~\ref{expl} and \ref{expm}, where \#params and M denotes number of parameters and Million respectively.  {\ZB We report the mean Logloss, AUC and MSE where applicable based on 5 runs. Asterisks next to numbers in tables indicate experiments where the mean metrics of our MDL method was a \textbf{statistically significant improvement upon the baselines}  ($p < 0.05$) using ANOVA (specifically, one-sided Dunnett's test for multiple comparisons \cite{Dunnett1955}). Standard deviations of multiple runs and statistical tables for the hypothesis tests under planned contrasts are provided in Appendix E and F, respectively. }

Experiments demonstrate that our method performs consistently better than other baseline methods, and is comparable to other best methods on the smaller dataset Frappe. Lower-order models LR and FM are sub-optimal for above datasets but they have a smaller computational footprint. For INN and RaFM model, we decided to bin more features to further reduce the dimensionality of features on the KDD2012 dataset so that they can be trained within reasonable time. However, this made them perform even worse than lower-order models. RaFM is a strong performer on Frappe dataset, because it uses different embeddings for features of different occurrence frequencies. Frappe contains fields whose cardinality varies significantly, and thus such an embedding scheme can be helpful. Moreover, we note that methods which consider the structure of categorical data or field information (FWL, RaFM, FFM, INN, xDFM) or using adaptive embedding (AFN, LLFM, IFM) perform better than simple LR or FM models, but they often require more modules, operations and data to learn effectively which makes them also slow. The MDL model
leads to consistently better results on most of the metrics and datasets.

{\ZB The results in Table~\ref{additional_results} show  that models for tabular data (NODE, TabNet, TabTr, SAINT) are generally inferior to models designed taking into account the sparsity of categorical data (W\&D, DeepFM, FMLP)  on the four large datasets. Our model, MDL, consistently outperforms all baselines in these experiments showing advantages of leveraging the field dependencies under the meta-learning regime. 
}
\subsection{Ablation Studies}
\label{ablsd}

\begin{figure}
\centering
\begin{subfigure}{0.24\textwidth}
  \includegraphics[trim=8 10 35 8, clip=true,width=1.0\columnwidth]{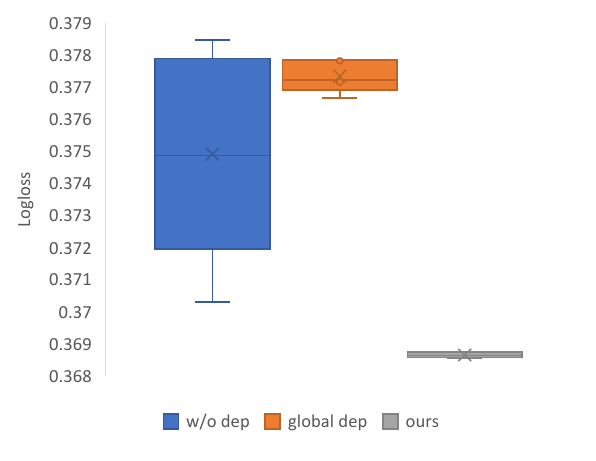}
  \caption{\label{avazu_box_log}}
\end{subfigure}
\begin{subfigure}{0.24\textwidth}
  \includegraphics[trim=8 10 35 8, clip=true,width=1.0\columnwidth]{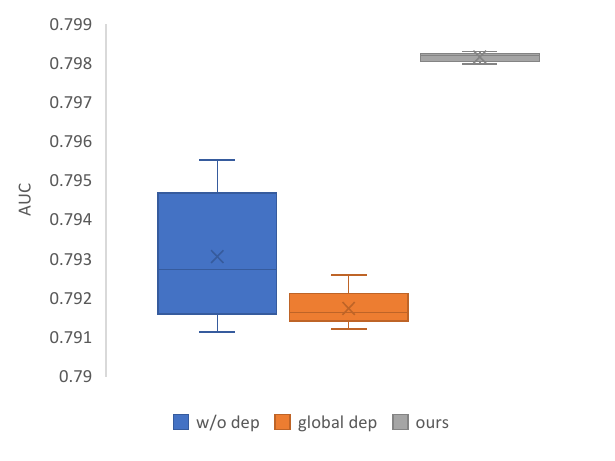}
  \caption{\label{avazu_box_auc}}
\end{subfigure}\\
\begin{subfigure}{0.24\textwidth}
  \includegraphics[trim=8 10 35 8, clip=true,width=1.0\columnwidth]{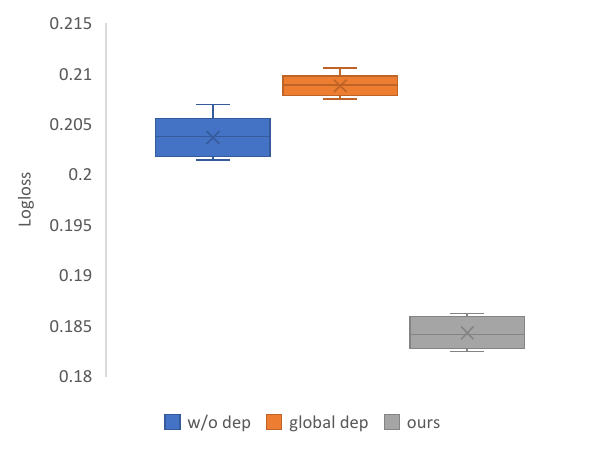}
  \caption{\label{mltag_box_log}}  
\end{subfigure}
\begin{subfigure}{0.24\textwidth}
  \includegraphics[trim=8 10 35 8, clip=true,width=1.0\columnwidth]{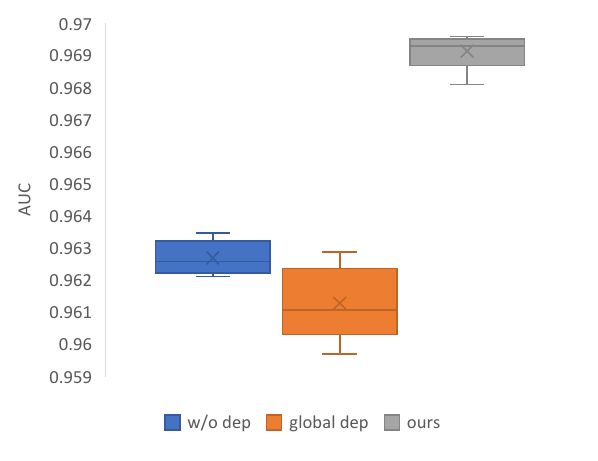}
  \caption{\label{mltag_box_auc}}
\end{subfigure}
\caption{The box plot for Logloss {\ZB and AUC} \wrt the two variants and our method on Avazu (Fig.~\ref{avazu_box_log}, Fig.~\ref{avazu_box_auc} ) {\ZB and ML-Tag (Fig.~\ref{mltag_box_log}, Fig.~\ref{mltag_box_auc})} datasets (the results are based on five runs).} \label{globaldep}
\end{figure}

We conduct detailed ablation studies on Avazu {\ZB and ML-Tag} to further verify the effectiveness of our method. All the results in this section are averaged over 5 runs on best hyper-parameters. 

\vspace{0.1cm}
\noindent\textbf{Effectiveness of Meta-Dependency Learning.}
To verify the effectiveness of Meta-Dependency Learning,  we train two variants of the proposed model: the first one (w/o dep) is the proposed model trained without the MDL steps; the second one (global dep) simply uses the dependency loss as a global regularisation term without the MDL. That is to say, the model only learns a global dependency matrix and the dependency loss is utilised as a constraint for learning the embedding matrix. {\ZB The learning objective of ``global dep'' is thus $\ell + \zeta\mathcal{L}$ where $\ell$ and $\mathcal{L}$ are the classification loss and dependency loss, respectively. We search the hyper-parameter $\zeta\geq 0$ in a reasonably large range.} The learnt global dependency and embedding matrices are used in the same way as in our method, and use the same backbone. Results on the test set are presented in Fig.~\ref{globaldep} (box plot).  Fig.~\ref{globaldep} shows that results of the two variants are significantly inferior compared to MDL.  This verifies the importance of field dependencies, and highlights that simply using the global field dependencies is sub-optimal. {\ZB Note that our method and ``global dep'' produced smaller standard deviations on the Avazu dataset compared to ``w/o dep'' as the field dependencies play a regularisation role in the learning process, thus preventing overfitting and lowering the standard deviation on the test set. Adding regularisation term $\mathcal{L}$ to the loss directly, as in $\ell + \zeta\mathcal{L}$ of ``global dep''  acts on the global loss, thus potentially resulting in overregularisation,  yielding results worse than in the case of ``w/o dep". 
In contrast, letting regularisation adapt with each  meta-learning task separately does improve results.  
For ML-Tag, the number of fields is much smaller than that in Avazu (3 versus 22), but the embedding dimension are higher (80 versus 40). The dependency matrix $\mathbf{W}$ is thus severely rank-deficient making the variance harder to control.} 

\begin{figure}
\vspace{-0.3cm}
\centering
\begin{subfigure}[b]{0.5\textwidth}
\includegraphics[trim=0 0 12 0, clip=true,width=1.0\columnwidth]{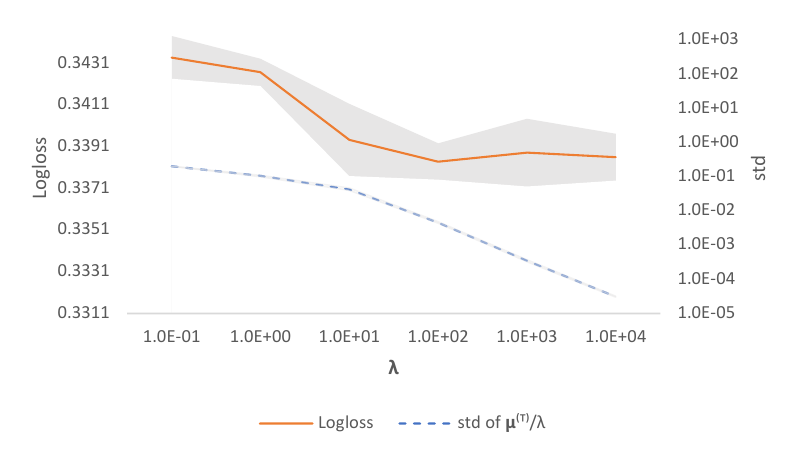}
\caption{\label{globaldep_mu1}}
\end{subfigure}
%
\begin{subfigure}[b]{0.5\textwidth}
\includegraphics[trim=0 0 12 0, clip=true,width=1.0\columnwidth]{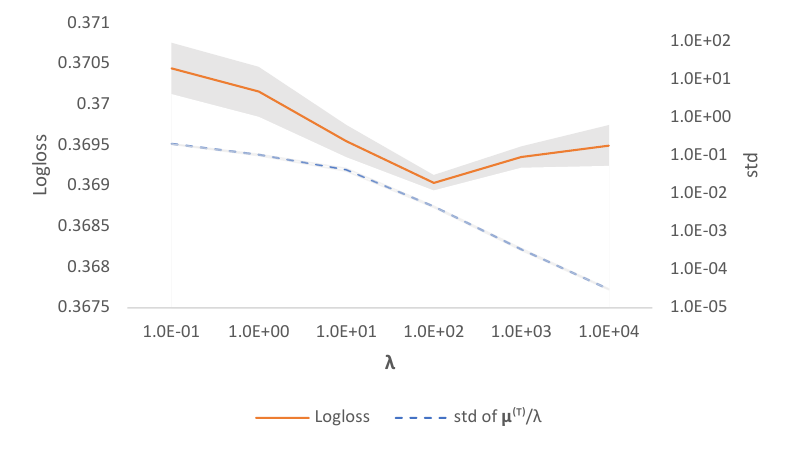}
\caption{\label{globaldep_mu2}}
\end{subfigure}
%
\begin{subfigure}[b]{0.5\textwidth}
\includegraphics[trim=0 0 12 0, clip=true,width=1.0\columnwidth]{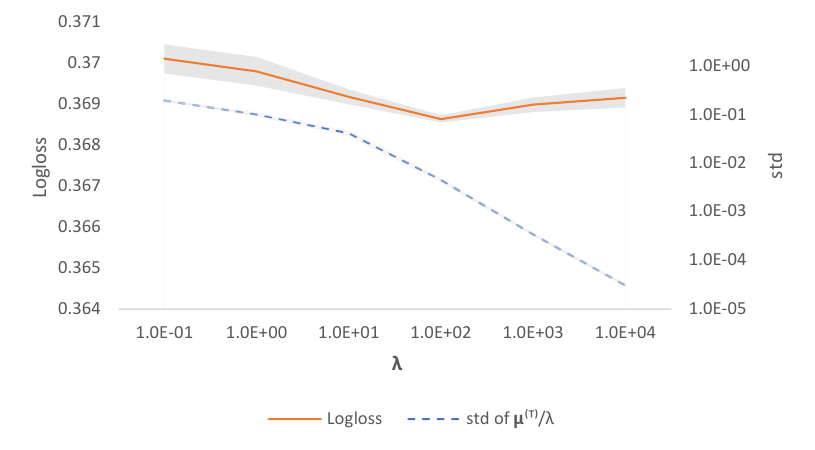}
\caption{\label{globaldep_mu3}}
\end{subfigure}
\caption{The curves for Logloss and standard deviation (std) of $\boldsymbol{\mu}^{(T)}/\lambda$ \wrt $\lambda$ on the train (Fig.~\ref{globaldep_mu1}), val (Fig.~\ref{globaldep_mu2}), and test (Fig.~\ref{globaldep_mu3}) sets of Avazu.}
\vspace{-0.2cm}
\end{figure}

\begin{figure}[t]
\vspace{-0.3cm}
\centering
\begin{subfigure}[b]{0.5\textwidth}
  \includegraphics[trim=0 6 12 0, clip=true,width=1.0\columnwidth]{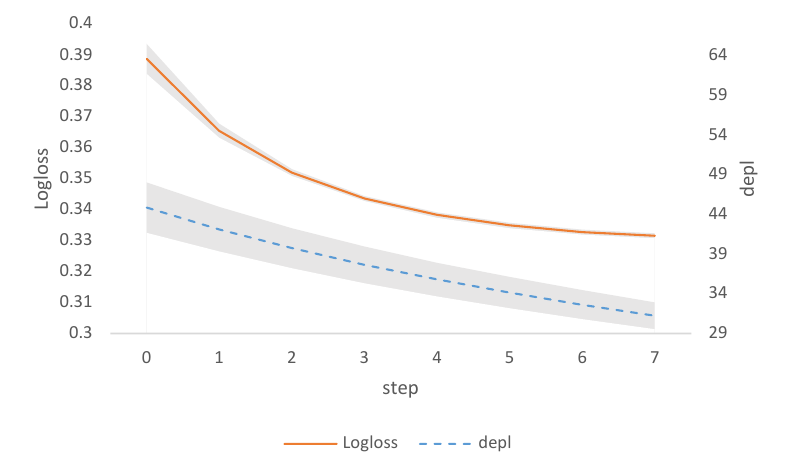}
  \caption{\label{depabl3}}
\end{subfigure}
\begin{subfigure}[b]{0.5\textwidth}
  \includegraphics[trim=0 6 12 0, clip=true,width=1.0\columnwidth]{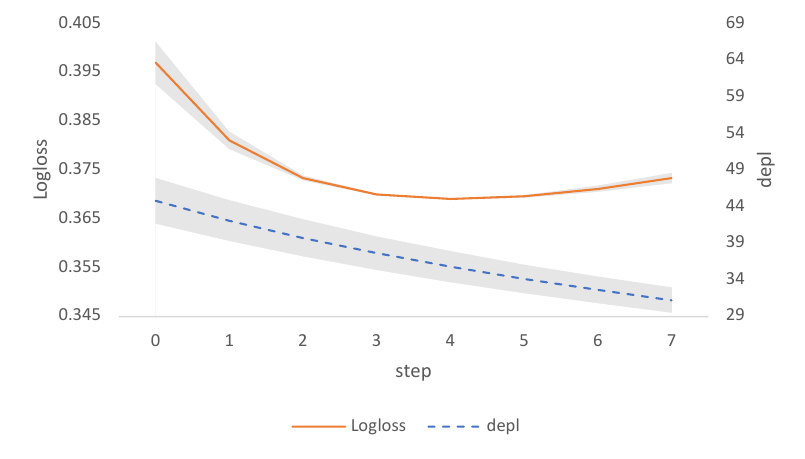}
  \caption{\label{depabl2}}
\end{subfigure}
\begin{subfigure}[b]{0.5\textwidth}
  \includegraphics[trim=0 6 12 0, clip=true,width=1.0\columnwidth]{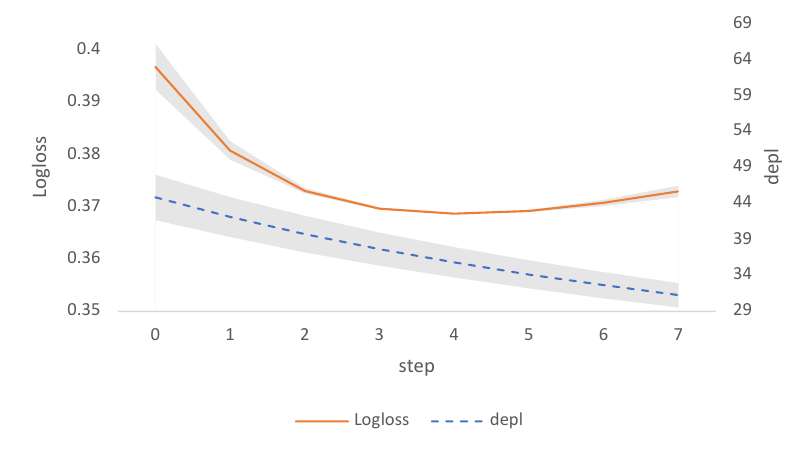}
  \caption{\label{depabl1}}
\end{subfigure}
\caption{{\ZB We train the MDL model on the train set of Avazu with $T=4$ and evaluate it on the train (Fig.~\ref{depabl3}), validation (Fig.~\ref{depabl2}) and test (Fig.~\ref{depabl1}) sets of Avazu with different $T$ for Logloss and dependency loss ({\em depl}). The dependency loss is decreasing with the number of steps $T$ on all sets, and Logloss attains its best value at $T=4$ in both validation and test, which is the same step ($T=4$) as in training.
}}
\end{figure}


\begin{figure*}[!htbp]
\hspace{-0.8cm}
\begin{subfigure}[b]{0.252\textwidth}
  \includegraphics[trim=10 75 110 110, clip=true,height=4.7cm]{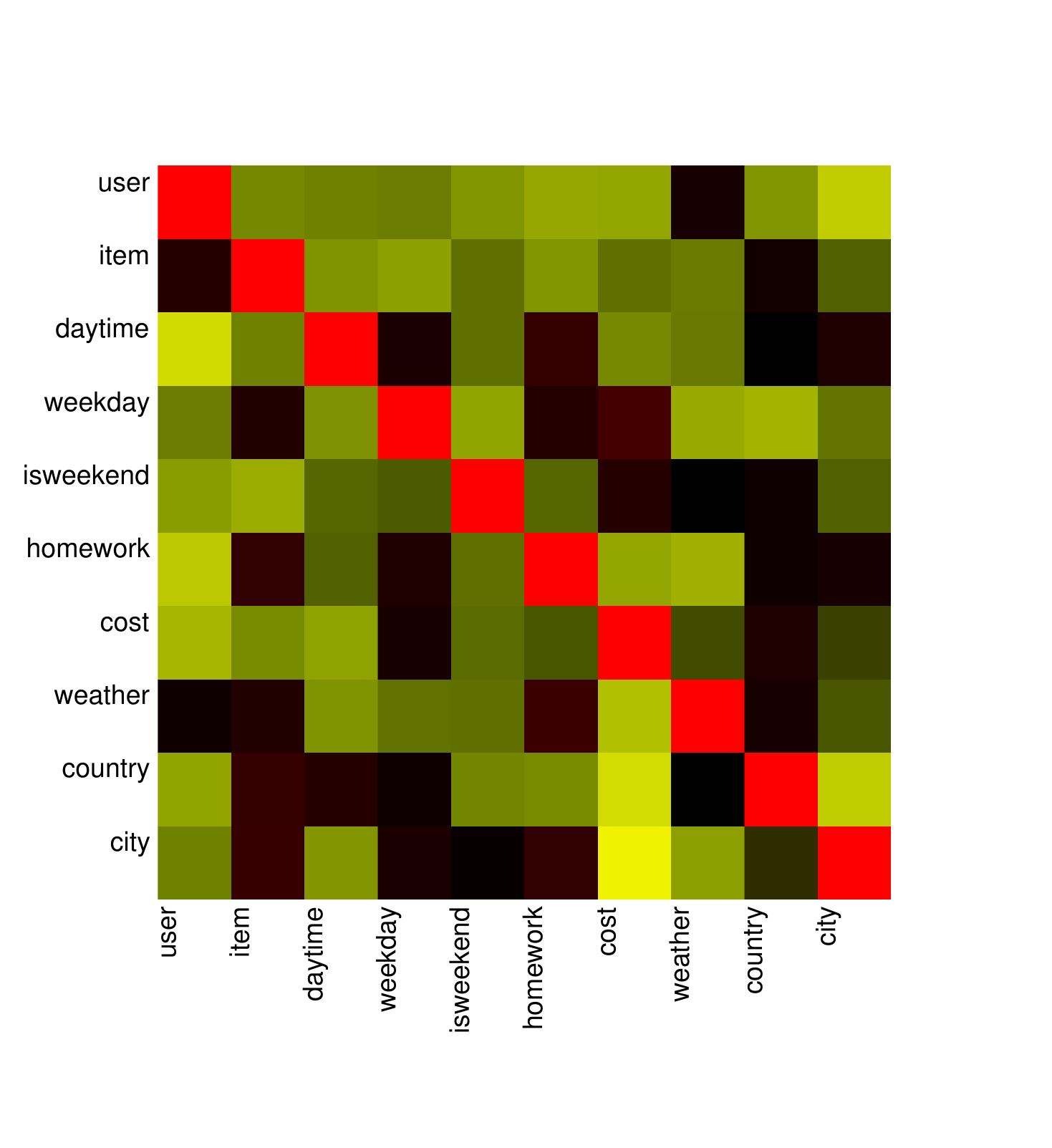}
  \caption{\label{w0}}
\end{subfigure}
\begin{subfigure}[b]{0.252\textwidth}
  \includegraphics[trim=10 75 110 110, clip=true,height=4.7cm]{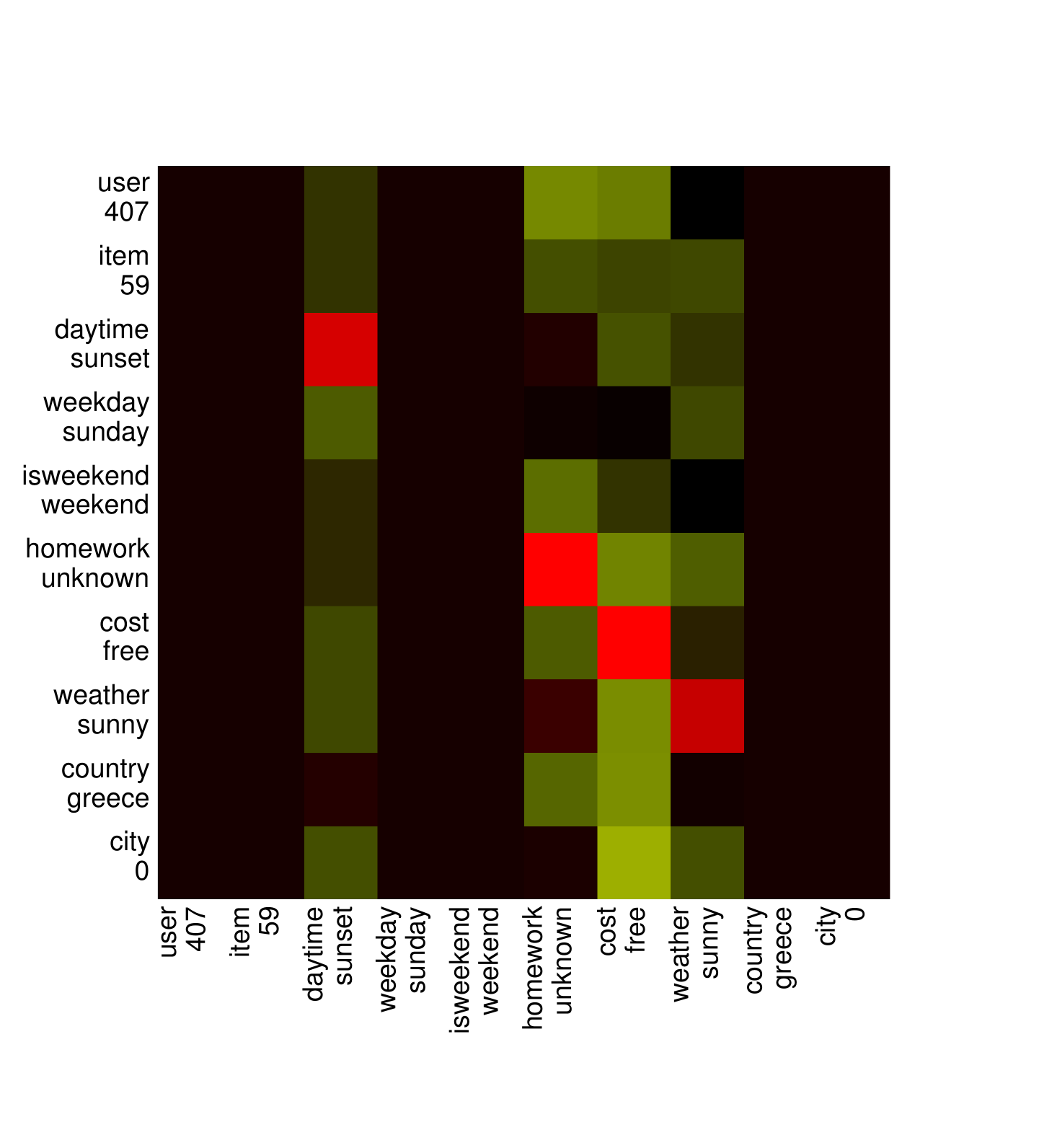}
  \caption{\label{muwA}}
\end{subfigure}
\begin{subfigure}[b]{0.252\textwidth}
 \includegraphics[trim=10 75 110 110, clip=true,height=4.7cm]{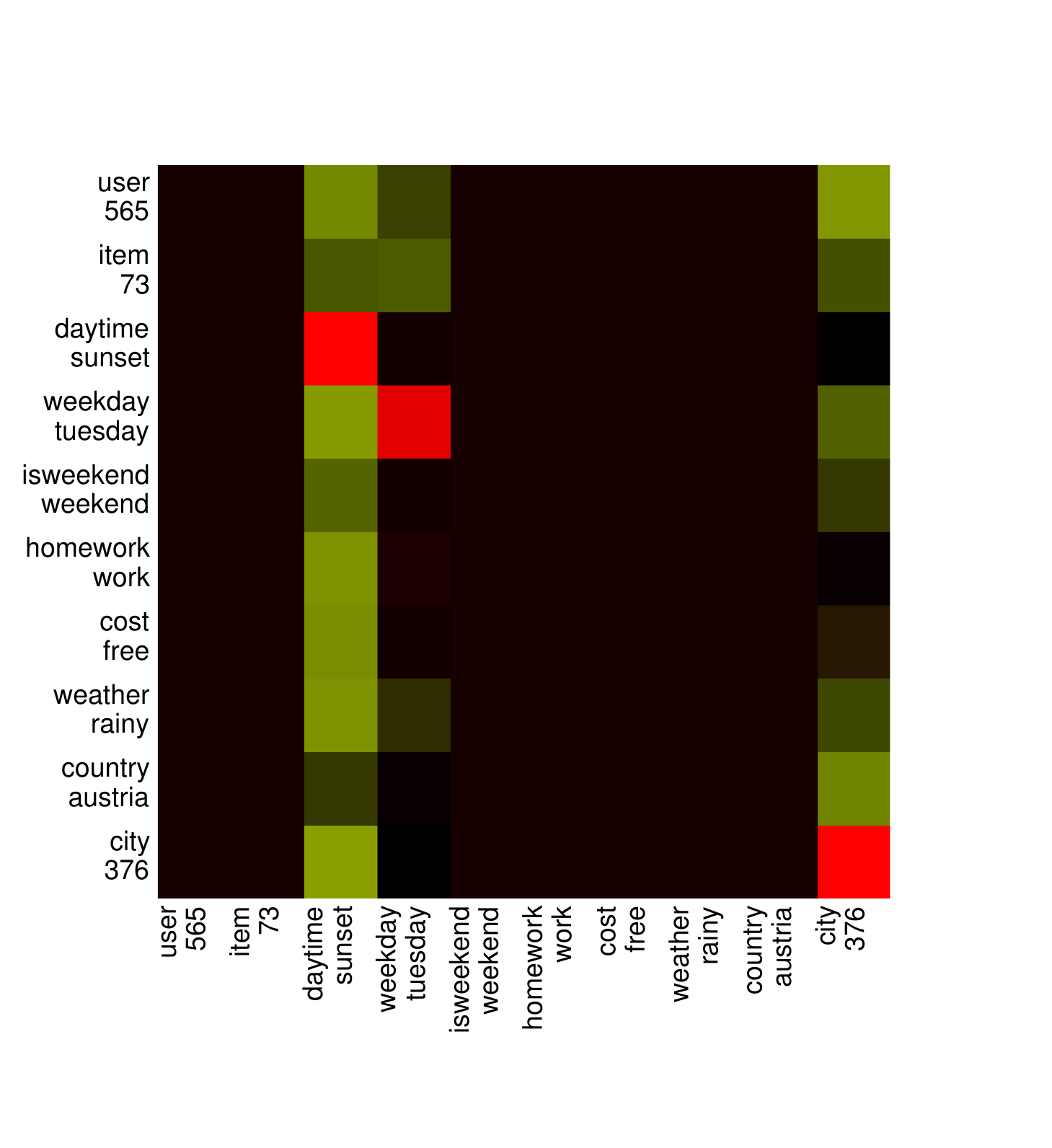}
\caption{\label{muwB}}
\end{subfigure}
\begin{subfigure}[b]{0.252\textwidth}
 \includegraphics[trim=10 75 110 110, clip=true,height=4.7cm]{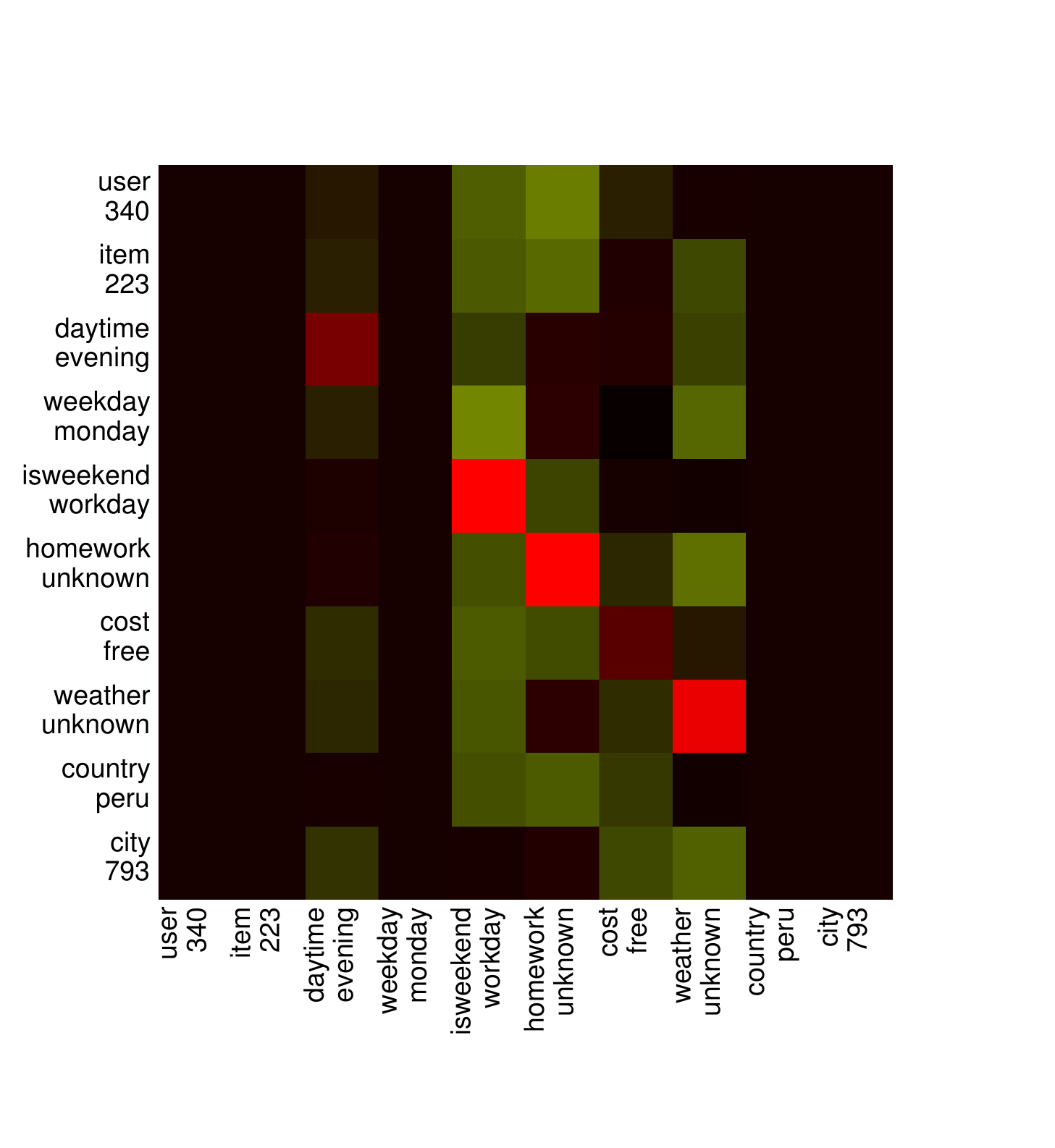}
\caption{\label{muwC}}
\end{subfigure}
\hspace{0.05cm}
\begin{subfigure}[b]{0.01\textwidth}
 \includegraphics[trim=550 40 110 128, clip=true,height=5.27cm]{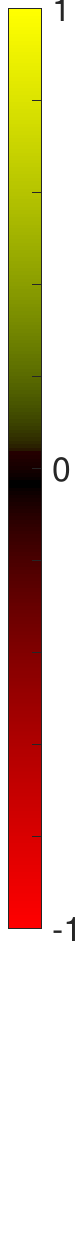}
\end{subfigure}
\caption{Illustration of dependency matrices before and after the refinement step in the inner loop of MDL on Frappe which contains 10 categorical fields. Fig. \ref{w0} shows $\mathbf{W}^{(0)}$ obtained after training is complete. Fig.~\ref{muwA}, \ref{muwB}, and \ref{muwC} show ${\boldsymbol{\mu}_A^{(T)}}\!\circ\!\mathbf{W}_A^{(T)}$,  ${\boldsymbol{\mu}_B^{(T)}}\!\circ\!\mathbf{W}_B^{(T)}$ and ${\boldsymbol{\mu}_C^{(T)}}\!\circ\!\mathbf{W}_C^{(T)}$ for three testing samples $A$, $B$ and $C$ after $T=4$ steps of refinement with Algorithm \ref{algotest}. The three samples are labelled as `{\em used}', `{\em unused}', and `{\em used}', respectively. Matrices were normalized to $(-1, 1)$ range.}\label{matw}
\end{figure*}

Fig.~\ref{globaldep_mu1}, \ref{globaldep_mu2} and \ref{globaldep_mu3} show the trend of Logloss \wrt the hyper-parameter $\lambda$ which controls the closeness of $\boldsymbol{\mu}^{(T)}/\lambda$ to the uniform distribution $\frac{1}{m}\mathbf{1}_m$. Other hyper-parameters are fixed/optimal. As  $\boldsymbol{\mu}^{(T)}$ may contain zeros, KL-divergence may yield infinity. Thus, we use the standard deviation of the elements in $\boldsymbol{\mu}^{(T)}/\lambda$ and average them across all samples as a measure. The figures show that $\lambda$  effectively controls the variance of elements in $\boldsymbol{\mu}^{(T)}/\lambda$ as seen on train, validation and test sets. As $\lambda$ increases, the standard deviation of $\boldsymbol{\mu}^{(T)}/\lambda$ is decreasing, indicating $\boldsymbol{\mu}^{(T)}/\lambda$ is getting close to the uniform distribution $\frac{1}{m}\mathbf{1}_m$. In train set, Logloss is decreasing as $\lambda$ increases, and becomes stable after certain point, but in test and validation set it increases after certain point. This may be caused by overfitting: as $\lambda$ increases, the non-zero elements in $\boldsymbol{\mu}^{(T)}$ increase, leading to denser $\boldsymbol{\mu}^{(T)}$. Compared to small $\lambda$, where sparse $\boldsymbol{\mu}^{(T)}$ may be produced, non-sparse $\boldsymbol{\mu}^{(T)}$ is more prone to overfitting 
despite being closer to the uniform distribution. Thus,  tuning $\lambda$ is beneficial for our MDL model.
%
%

\vspace{0.1cm}
\noindent\textbf{Relation of field dependencies to classification.}
To analyse how field dependencies aid learning informative embeddings and dependencies for the classification task, we present the curve reflecting the correlation between the dependency loss and Logloss generated by means of: 1) training the model on train data using the best hyper-parameters ($T=4$); and 2) testing the model on the train/validation/test data, but with a varying number of steps $T$ for MDL. Subsequently, we plot curves for the Logloss and the average dependency loss per instance \wrt the number of steps. 

Fig.~\ref{depabl3} shows that on the train set both the dependency loss and task-specific Logloss decrease with the step $T$, {\ZB showing correlation of Logloss with  the dependency loss.} Fig.~\ref{depabl2} and \ref{depabl1} show on validation and test sets that the dependency loss decreases together with Logloss  to a certain point {\ZB (to the point of $T=4$ which we use in training the model)} before overfitting gradually emerges, and thus Logloss starts increasing. This means both losses can align to some degree, and our MDL generalises well to unseen (validation/test) sets {\ZB on the step $T$}. As expected, the AUC follows the same trend as Logloss (to be precise, the maximum of AUC coincides with the minimum of Logloss). Plots of AUC are in Appendices.  The results imply that the dependency loss acts as a surrogate for the classification task up to the point where overfitting to the surrogate task becomes apparent.  

\subsection{Case Study}
Fig.~\ref{matw} illustrates the refinement step dependency matrices before and after the refinement step in the inner loop of MDL. This qualitative study is performed on Frappe (with 10 categorical fields). Fig.~\ref{w0} shows $\mathbf{W}^{(0)}$ after training with Algorithm \ref{algo}. 

Subsequently, we choose test samples $A$, $B$ and $C$ whose labels are `{\em used}', `{\em unused}', and `{\em used}', respectively. Fig.~\ref{muwA}, \ref{muwB}, and \ref{muwC} show  dependency matrices $\mathbf{W}_A^{(T)}$, $\mathbf{W}_B^{(T)}$ and $\mathbf{W}_C^{(T)}$ modulated by weight vectors ${\boldsymbol{\mu}_A^{(T)}}$, ${\boldsymbol{\mu}_B^{(T)}}$ and ${\boldsymbol{\mu}_C^{(T)}}$ after $T=4$ steps of refinement via Algorithm \ref{algotest}. 

Notice that column `{\em daytime: sunset}' in Fig.~\ref{muwB} correlates well with several categorical fields/features, and with label `{\em unused}'. In contrast,  columns `{\em daytime: sunset}' and `{\em daytime: evening}' in Fig.~\ref{muwA} and \ref{muwC} are less correlated with other categorical fields/features, which suggests a decline in correlation with labels `{\em used}' of both samples.

Moreover, column `{\em cost: free}' in Fig.~\ref{muwB} is completely zeroed (no correlation), whereas columns denoted `{\em cost: free}' in Fig.~\ref{muwA} and \ref{muwC} are non-zero, indicating correlation with several categorical fields, and labels `{\em used}' of both samples. Naturally, categorical field/feature `{\em cost: free}'  correlates well with labels `{\em used }' of `{\em user: 407}' and  `{\em user: 340}', indicating that ${\boldsymbol{\mu}_A^{(T)}}$ and ${\boldsymbol{\mu}_C^{(T)}}$  activated correctly column `{\em cost: free}' for samples $A$ and $C$, whereas ${\boldsymbol{\mu}_B^{(T)}}$  suppressed correctly column `{\em cost: free}' for sample $B$ with label `{\em unused}'. Both examples shows that MDL discovers localized dependencies well.

\section{Conclusions}
In this paper, we have explored a new way of learning on categorical data. We have shown that the dependency loss can be used as a surrogate for the target task via the meta-learning regime. The learned dependencies are beneficial to the performance of the proposed model. Instead of modelling feature dependencies at the global level, we model the field dependencies with a much lower number of parameters and adjust per instance to capture nuanced local field dependencies. Importantly, our ablation studies confirm that both losses align to certain degree, which is especially helpful in the testing stage, where labels are not available for the adaptation to the incoming data. 

{\ZB
Most of methods have their limitations, and ours is no exception. The meta-learning algorithm used in this paper introduces several additional parameters, which require tuning. The optimal step size $\eta$ involved in the inner steps can vary a lot depending on other hyper-parameters and datasets. We searched the values of $\eta$ in a small set which might be insufficient. This however provides a future direction of our research on developing a meta-learning regime that help generalise the hyper-parameters across instances, altogether with the generalisation of loss function in current work.
}

\section*{Acknowledgments}

This work was funded by CSIRO's Machine Learning and Artificial Intelligence Future Science Platform (MLAI FSP).

\bibliographystyle{IEEEtranN}
\bibliography{ref}


\vspace{-1.5cm}
\begin{IEEEbiography}[{\includegraphics[width=1in,height=1.25in,clip,keepaspectratio]{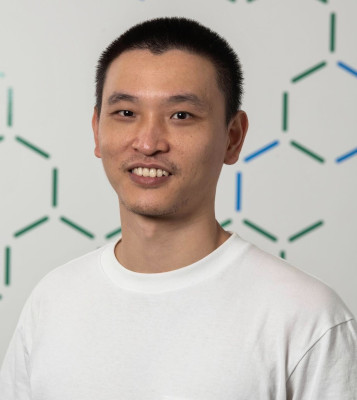}}]{Zhibin Li} is a postdoctoral research fellow at Machine Learning \& Artificial Intelligence Future Science Platform, CSIRO, Australia. He obtained his Ph.D. degree in Information Systems from the University of Technology Sydney in 2021. His research interests include spatio-temporal data mining, traffic prediction, and recommender systems. He has published over 10 papers in top journals/refereed conference proceedings such as IJCV, TKDE, NeurIPS, KDD, AAAI and IJCAI.  
\end{IEEEbiography}
\vspace{-4.95cm}

\vspace{-4.95cm}
\begin{IEEEbiography}[{\includegraphics[width=1in,height=1.25in,clip,keepaspectratio]{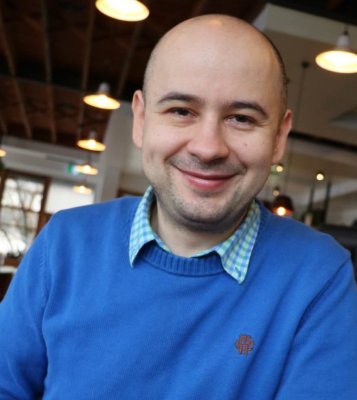}}]{Piotr Koniusz.}
A Senior Researcher in Machine Learning Research Group at Data61, CSIRO, and a Senior Honorary Lecturer at the Australian National University (ANU). He was a postdoctoral researcher in the team LEAR, INRIA, France. He received his BSc in Telecommunications and Software Engineering in 2004 from the Warsaw University of Technology, Poland, and completed his PhD in Computer Vision in 2013 at CVSSP, University of Surrey, UK. 
His current interests include contrastive and few-shot learning. He has received several awards such as the Sang Uk Lee Best Student Paper Award from ACCV'22, the Runner-up APRS/IAPR Best Student Paper Award from DICTA'22, outstanding Area Chair by ICLR 2021--2023. He  serves as a Workshop Program Chair and Senior Area Chair for NeurIPS'23.
\end{IEEEbiography}
\vspace{-4.95cm}

\vspace{-4.95cm}
\begin{IEEEbiography}[{\includegraphics[width=1in,height=1.25in,clip,keepaspectratio]{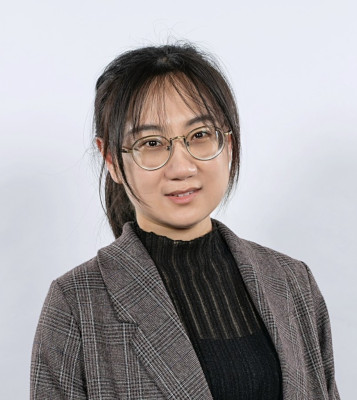}}]{Lu Zhang} received her Ph.D. degree from the Faculty of Engineering and Information Technology, University of Technology Sydney, Australia. She is currently a postdoctoral researcher in the Queensland Brain Institute \& Faculty of Engineering, Architecture, and Information Technology, University of Queensland, Australia. Her research interests include machine learning and deep learning for multimedia data analysis.
\end{IEEEbiography}
\vspace{-4.95cm}

\vspace{-4.95cm}
\begin{IEEEbiography}[{\includegraphics[width=1in,height=1.25in,clip,keepaspectratio]{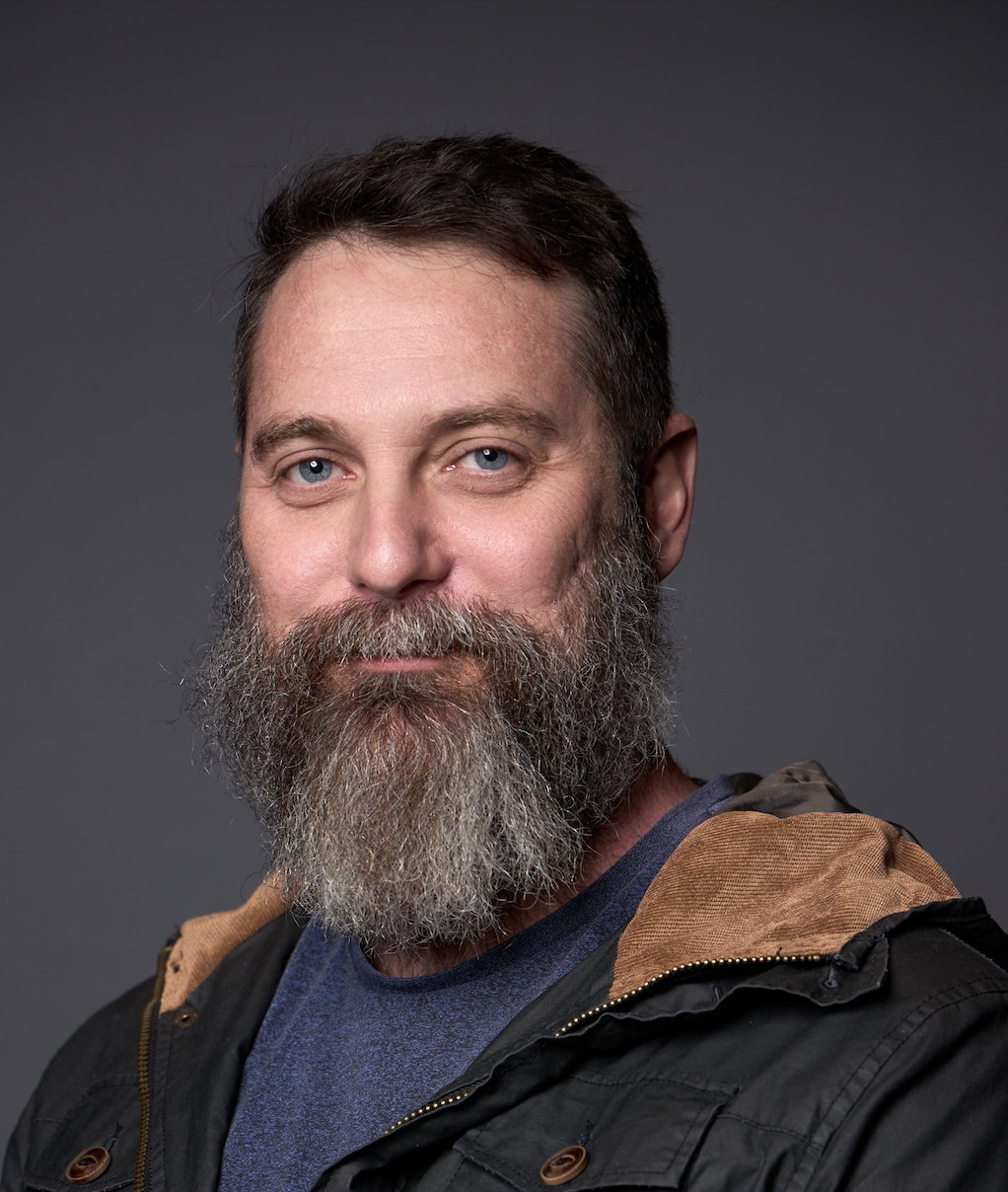}}]{Daniel Edward Pagendam} is a senior research scientist at Data61, CSIRO where he works on problems at the interface of the environmental sciences and statistics. Dan  received his PhD in Mathematics and Statistics from the University of Queensland, Australia. His main research interests are in stochastic modelling, Bayesian statistical methods and Machine Learning. 
\end{IEEEbiography}
\vspace{-4.95cm}

\vspace{-4.95cm}
\begin{IEEEbiography}[{\includegraphics[width=1in,height=1.25in,clip,keepaspectratio]{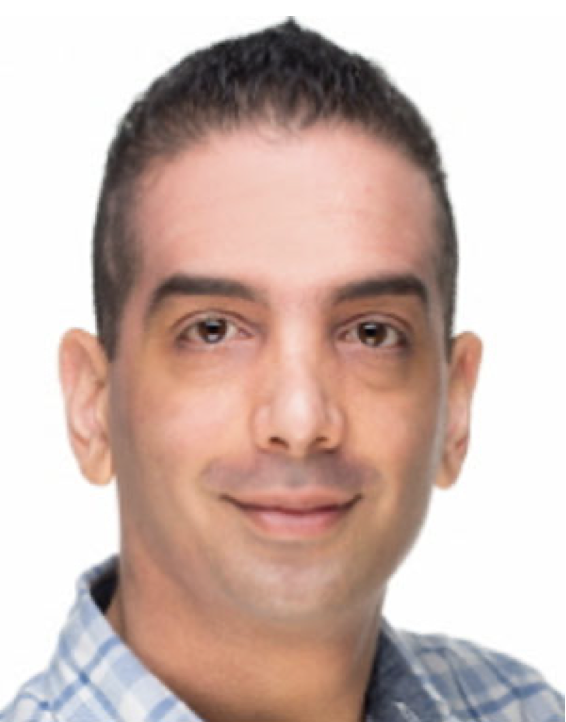}}]{Peyman Moghadam}
received his Ph.D. degree in electrical and electronic engineering from Nanyang Technological University, Singapore in 2012. He is currently a Principal Research Scientist and Cluster Leader at the Robotics and Autonomous Systems Group, Commonwealth Scientific and Industrial Research Organisation (CSIRO), Australia. He is also an Adjunct Professor at the Queensland University of Technology (QUT). His current research interests include Embodied AI, robotics and machine learning.
\end{IEEEbiography}
\vspace{-4.95cm}



\clearpage
\def\myappendix{1}

\title{Exploiting Field Dependencies for Learning on Categorical Data (Supplementary Material)}

\author{Zhibin~Li, 
        Piotr Koniusz, 
        Lu Zhang, 
        Daniel Edward Pagendam, 
        Peyman Moghadam
        \vspace{-1.0cm}
        }

\IEEEtitleabstractindextext{%
}


\maketitle

\renewcommand{\appendixname}{Appendices}
\appendix

\subsection*{A. Additional Plots on the Avazu Dataset}
{\ZB Additional plots on the Avazu dataset regarding AUC are presented in Fig.~\ref{avazuappend1} and \ref{avazuappend2}.}

\begin{figure}[!h]
\vspace{-0.3cm}
\centering
\begin{subfigure}[b]{0.5\textwidth}
  \includegraphics[trim=0 6 12 0, clip=true,width=1.0\columnwidth]{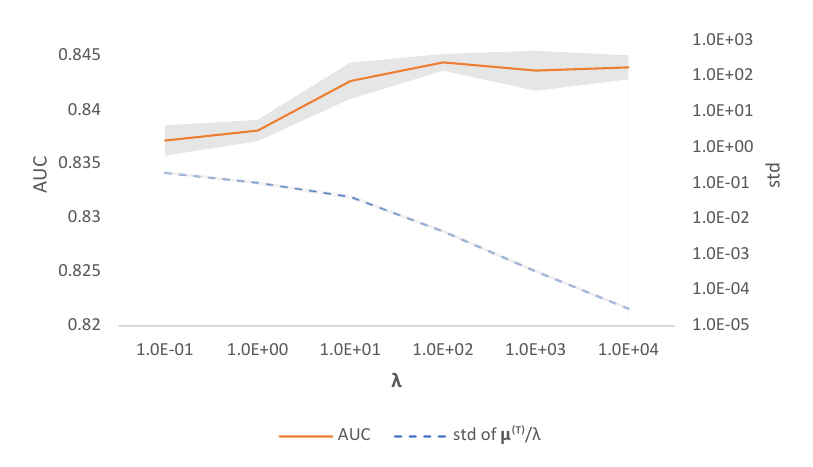}
  \caption{\label{avazu_etaabl6}}
\end{subfigure}
\begin{subfigure}[b]{0.5\textwidth}
  \includegraphics[trim=0 6 12 0, clip=true,width=1.0\columnwidth]{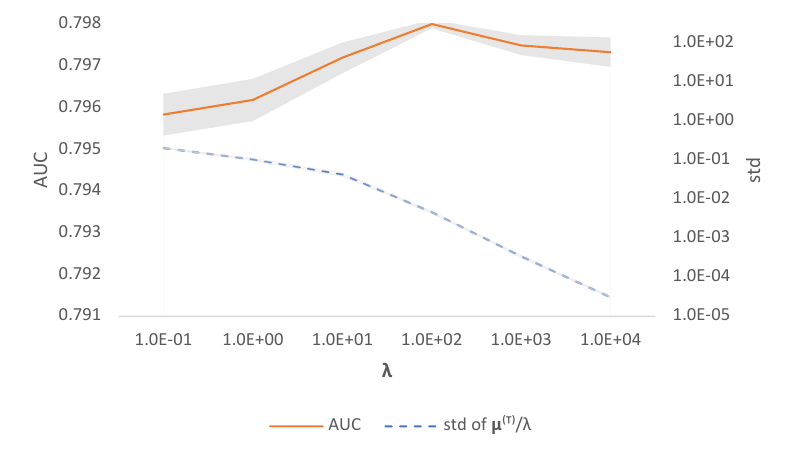}
  \caption{\label{avazu_etaabl5}}
\end{subfigure}
\begin{subfigure}[b]{0.5\textwidth}
 \includegraphics[trim=0 6 12 0, clip=true,width=1.0\columnwidth]{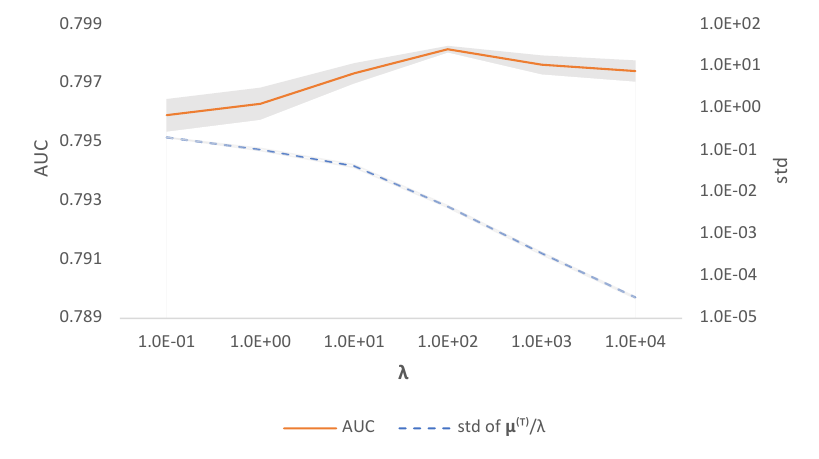}
\caption{\label{avazu_etaabl4}}
\end{subfigure}
\caption{The curves for AUC and standard deviation (std) of $\boldsymbol{\mu}^{(T)}/\lambda$ \wrt $\lambda$ on the train (Fig.~\ref{avazu_etaabl6}), validation (Fig.~\ref{avazu_etaabl5}) and test (Fig.~\ref{avazu_etaabl4}) sets of Avazu.
}\label{avazuappend1}
\vspace{-0.5cm}
\end{figure}

\begin{figure}[!b]
\centering
\begin{subfigure}[b]{0.5\textwidth}
  \includegraphics[trim=0 6 12 0, clip=true,width=1.0\columnwidth]{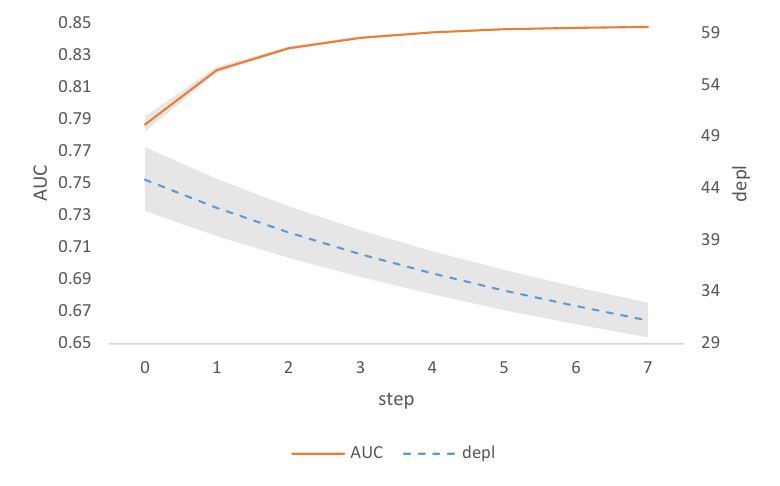}
  \caption{\label{depabl6}}
\end{subfigure}
\begin{subfigure}[b]{0.5\textwidth}
  \includegraphics[trim=0 6 12 0, clip=true,width=1.0\columnwidth]{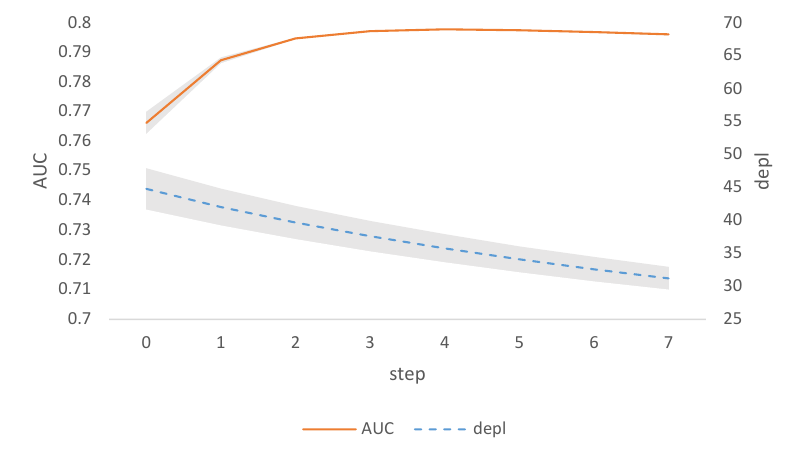}
  \caption{\label{depabl5}}
\end{subfigure}
\begin{subfigure}[b]{0.5\textwidth}
 \includegraphics[trim=0 6 12 0, clip=true,width=1.0\columnwidth]{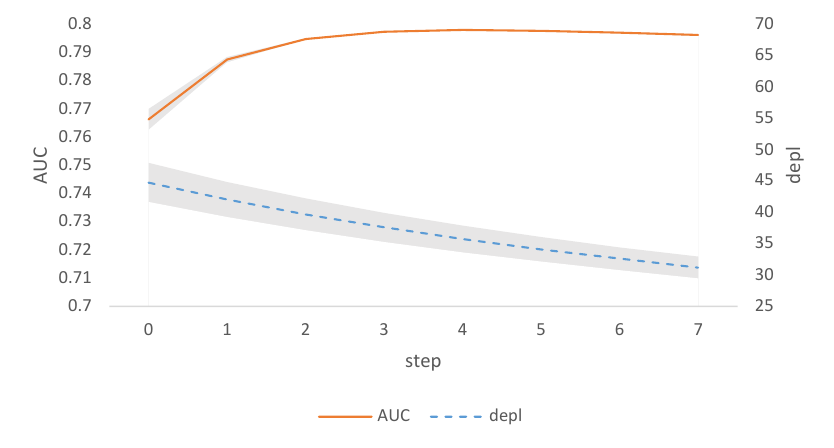}
\caption{\label{depabl4}}
\end{subfigure}
\caption{\ZB We train the MDL model on the train set of Avazu with $T=4$ and evaluate it on the train (Fig.~\ref{depabl6}), validation (Fig.~\ref{depabl5}) and test (Fig.~\ref{depabl4}) sets of Avazu with different $T$ for AUC and dependency loss ({\em depl}).
}\label{avazuappend2}
\end{figure}

\clearpage

\subsection*{B. Additional Plots on the ML-Tag Dataset}
{\ZB Additional plots on the ML-Tag dataset regarding AUC and Logloss are presented in Fig.~\ref{mltagappend1}, \ref{mltagappend2}, \ref{mltagappend3}, and \ref{mltagappend4}.}

\begin{figure}[!h]
\centering
\begin{subfigure}[b]{0.5\textwidth}
  \includegraphics[trim=16 6 14 0, clip=true,width=1.0\columnwidth]{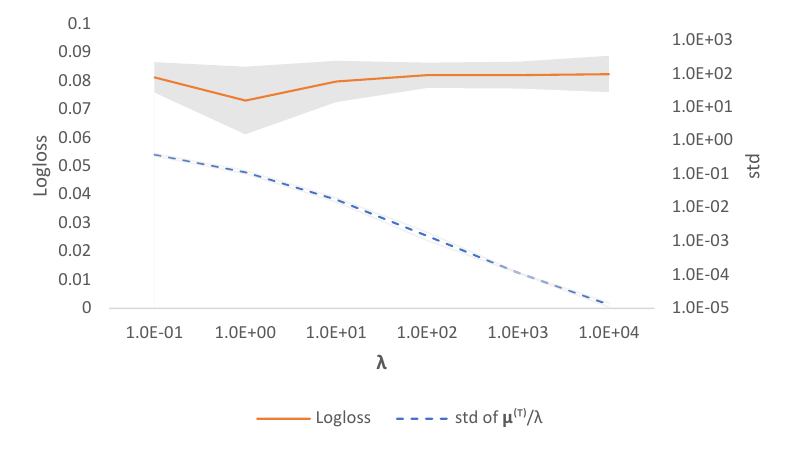}
  \caption{\label{mltag_eta_logloss_train}}
\end{subfigure}
\begin{subfigure}[b]{0.5\textwidth}
 \includegraphics[trim=16 6 14 0, clip=true,width=1.0\columnwidth]{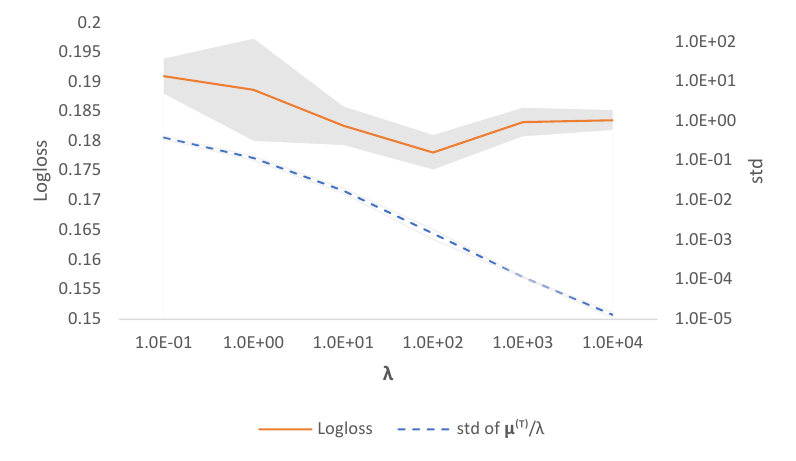}
\caption{\label{mltag_eta_logloss_val}}
\end{subfigure}
\begin{subfigure}[b]{0.5\textwidth}
  \includegraphics[trim=16 6 14 0, clip=true,width=1.0\columnwidth]{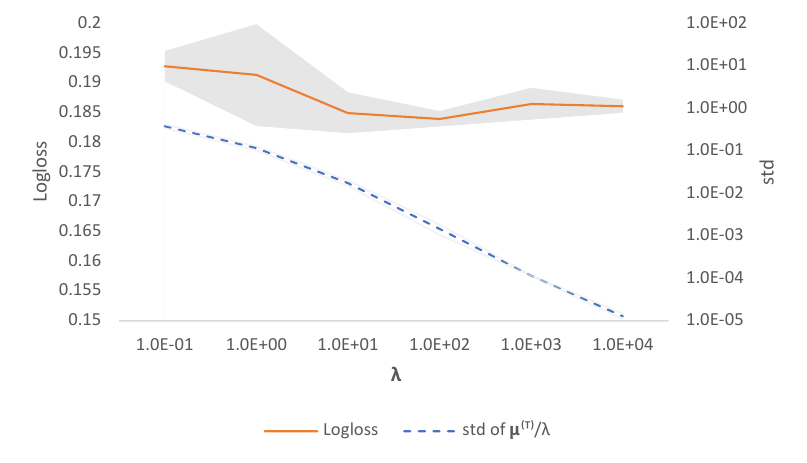}
  \caption{\label{mltag_eta_logloss_test}}
\end{subfigure}
\caption{\ZB The curves for Logloss and standard deviation (std) of $\boldsymbol{\mu}^{(T)}/\lambda$ \wrt $\lambda$ on the train (Fig.~\ref{mltag_eta_logloss_train}), val (Fig.~\ref{mltag_eta_logloss_val}), and test (Fig.~\ref{mltag_eta_logloss_test}) sets of ML-Tag.
}\label{mltagappend1}
\end{figure}

\begin{figure}[!h]
\centering
\begin{subfigure}[b]{0.5\textwidth}
  \includegraphics[trim=16 6 14 0, clip=true,width=1.0\columnwidth]{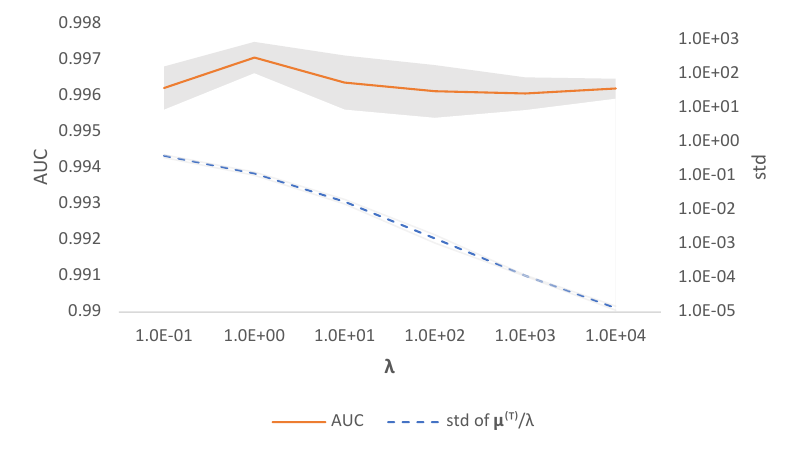}
  \caption{\label{mltag_eta_auc_train}}
\end{subfigure}\\
\begin{subfigure}[b]{0.5\textwidth}
  \includegraphics[trim=16 6 14 0, clip=true,width=1.0\columnwidth]{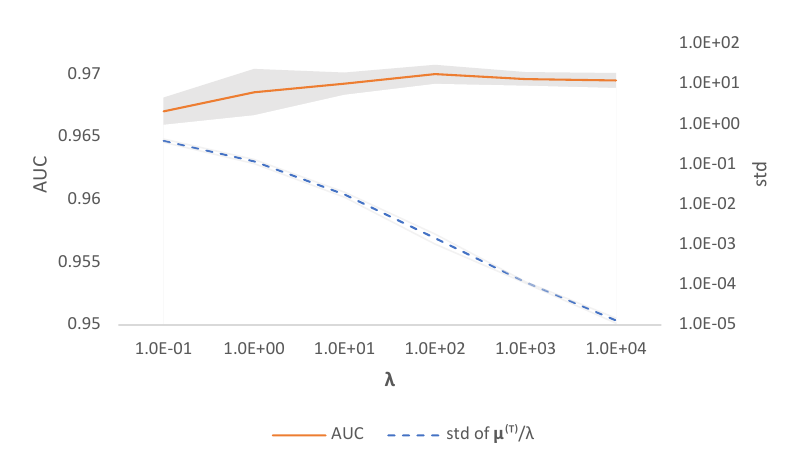}
  \caption{\label{mltag_eta_auc_val}}
\end{subfigure}
\begin{subfigure}[b]{0.5\textwidth}
 \includegraphics[trim=16 6 14 0, clip=true,width=1.0\columnwidth]{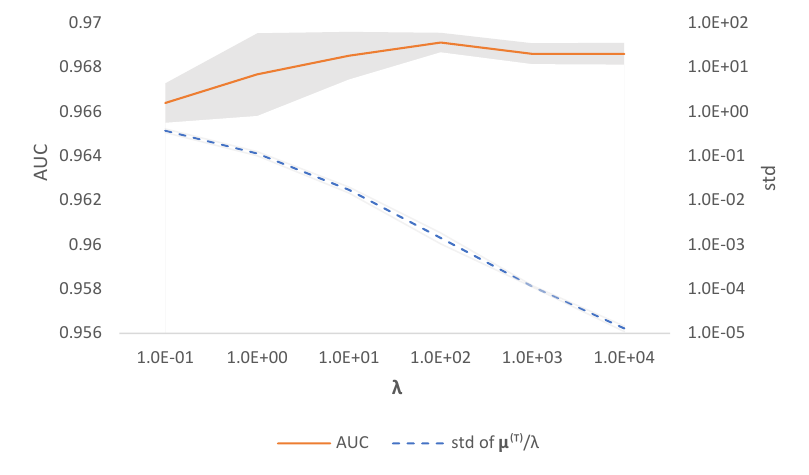}
\caption{\label{mltag_eta_auc_test}}
\end{subfigure}
\caption{\ZB The curves for AUC and standard deviation (std) of $\boldsymbol{\mu}^{(T)}/\lambda$ \wrt $\lambda$ on the train (Fig.~\ref{mltag_eta_auc_train}), validation (Fig.~\ref{mltag_eta_auc_val}) and test (Fig.~\ref{mltag_eta_auc_test}) sets of ML-Tag.
}\label{mltagappend2}
\end{figure}

\begin{figure}[!h]
\centering
\begin{subfigure}[b]{0.5\textwidth}
  \includegraphics[trim=16 6 14 0, clip=true,width=1.0\columnwidth]{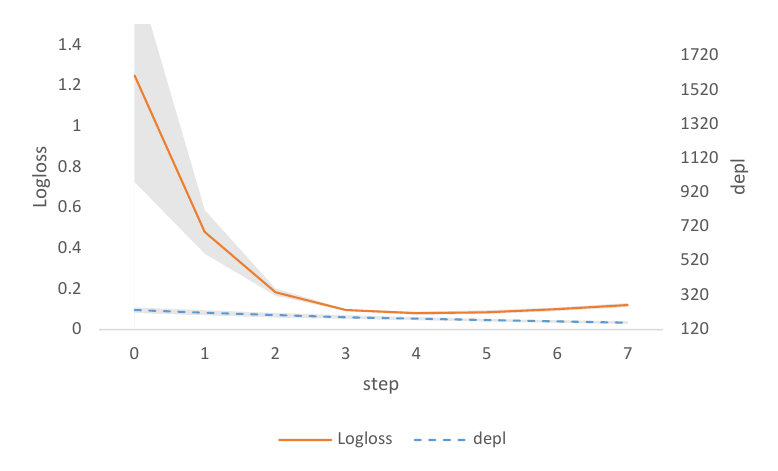}
  \caption{\label{mltag_depl_logloss_train}}
\end{subfigure}
\begin{subfigure}[b]{0.5\textwidth}
 \includegraphics[trim=16 6 14 0, clip=true,width=1.0\columnwidth]{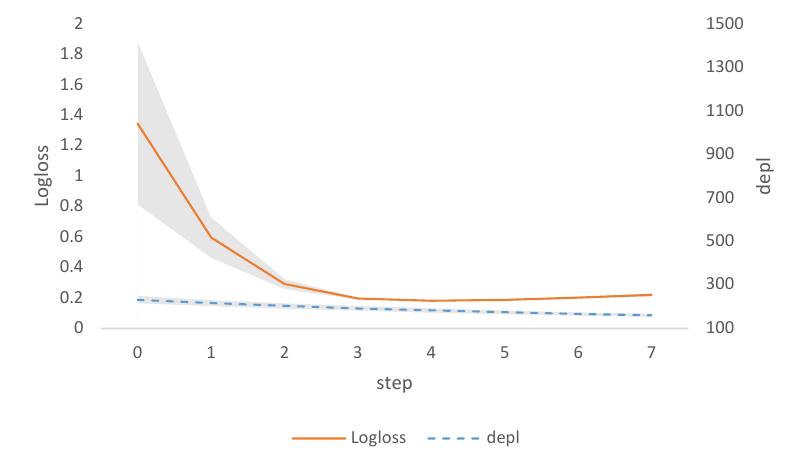}
\caption{\label{mltag_depl_logloss_val}}
\end{subfigure}
\begin{subfigure}[b]{0.5\textwidth}
  \includegraphics[trim=16 6 14 0, clip=true,width=1.0\columnwidth]{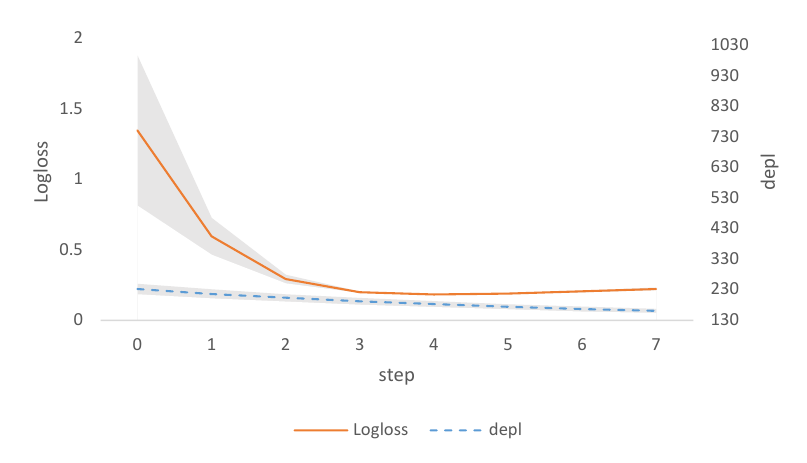}
  \caption{\label{mltag_depl_logloss_test}}
\end{subfigure}
\caption{\ZB We train the MDL model on train set of ML-Tag with $T=4$ and evaluate it on the train (Fig.~\ref{mltag_depl_logloss_train}), validation (Fig.~\ref{mltag_depl_logloss_val}) and test (Fig.~\ref{mltag_depl_logloss_test}) sets of ML-Tag with different $T$ for Logloss and dependency loss ({\em depl}).
}\label{mltagappend3}
\end{figure}

\begin{figure}[!h]
\centering
\begin{subfigure}[b]{0.5\textwidth}
  \includegraphics[trim=16 6 14 0, clip=true,width=1.0\columnwidth]{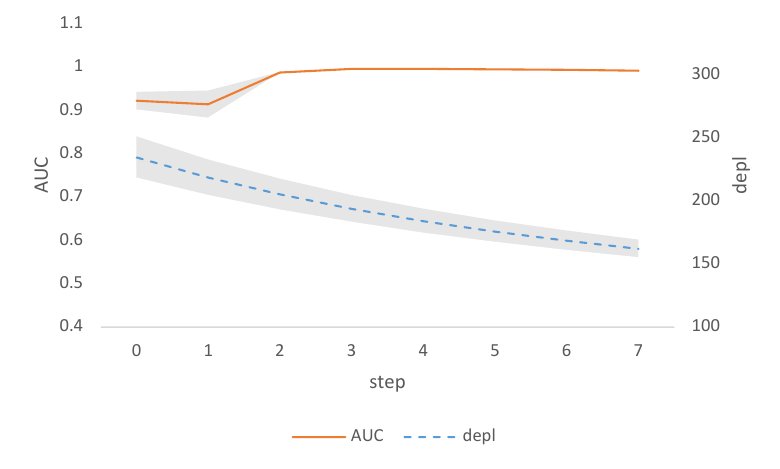}
  \caption{\label{mltag_depl_auc_train}}
\end{subfigure}
\begin{subfigure}[b]{0.5\textwidth}
  \includegraphics[trim=16 6 14 0, clip=true,width=1.0\columnwidth]{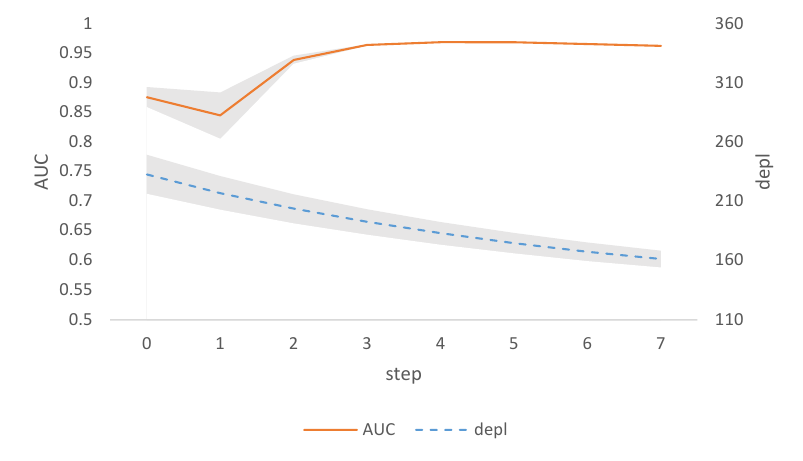}
  \caption{\label{mltag_depl_auc_val}}
\end{subfigure}
\begin{subfigure}[b]{0.5\textwidth}
 \includegraphics[trim=16 6 14 0, clip=true,width=1.0\columnwidth]{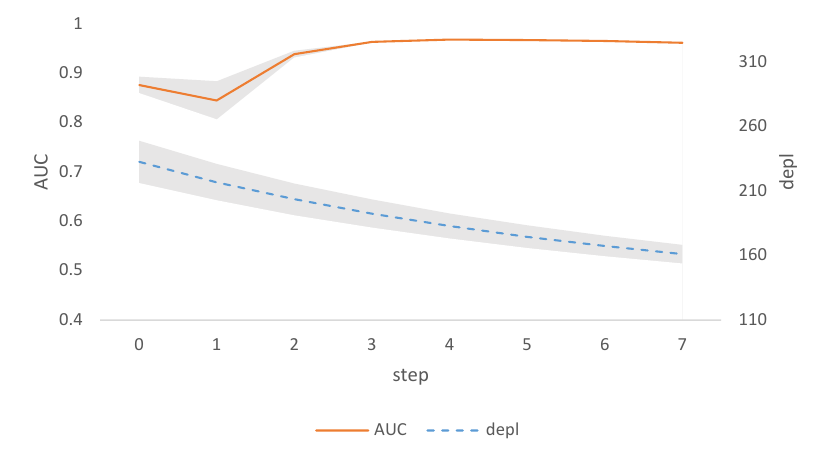}
\caption{\label{mltag_depl_auc_test}}
\end{subfigure}
\caption{\ZB We train the MDL model on train set of ML-Tag with $T=4$ and evaluate it on the train ( Fig.~\ref{mltag_depl_auc_train}), validation (Fig.~\ref{mltag_depl_auc_val}) and test (Fig.~\ref{mltag_depl_auc_test}) sets of ML-Tag with different $T$ for AUC and dependency loss ({\em depl}).
}\label{mltagappend4}
\end{figure}

\clearpage

\subsection*{C. Runtimes}
Some runtime is presented in Table~\ref{runtimes1} and \ref{runtimes2}.

\begin{table}[!h]
\centering
\small
\caption{The runtime (in seconds) of MDL per dataset.}\label{runtimes1}
\begin{tabular}{@{}cccccc@{}}
\toprule
                       KDD2012 & Avazu & Criteo & ML-Tag & Frappe & ML-Rating \\ \midrule
 18557   & 5809  & 21713  & 120     & 20     & 112       \\ \bottomrule
\end{tabular}
\end{table}

{\ZB 
\begin{table}[!h]
\caption{The runtimes on Avazu for MDL and all other methods used in our comparisons.}\label{runtimes2}
\begin{tabular}{@{}ll@{}}
\toprule
Method & Runtime \\ \midrule
LR & 1h36m \\
FM & 1h8m \\
FFM & 7h20m \\
xDFM & 1h \\
AFN & 50m \\
IFM & 40m \\
INN & 1h53m \\
LLFM & 30h45m \\
RaFM & 20h50m \\
FWL & 1h31m \\
W\&D & 2h36m \\
DeepFM & 3h18m \\
NODE & 3h50m \\
TabNet & 3h18m \\
TabTr & 20h55m \\
SAINT & 2h36m \\
FMLP & 26m \\
MDL (ours) & 1h36m \\ \bottomrule
\end{tabular}
\end{table}
}

\subsection*{D. Ablations on Hyper-Parameters}
{\ZB The impact of $T$ and the number of MLP layers on results of validation set of Avazu are presented in Table~\ref{hyperptable}.}
\begin{table}
\centering
\setlength{\tabcolsep}{2pt}
\small
\caption{\ZB Ablations on hyper-parameters $T$ and the number of MLP layers on the validation set of Avazu.}\label{hyperptable}
\label{}
\begin{tabular}{@{}lcccccccc@{}}
\toprule
 & \multicolumn{4}{c}{T (step)} &  & \multicolumn{3}{c}{Number of Layers} \\ \cmidrule(lr){2-5} \cmidrule(l){7-9} 
 & 2 & 4 & 6 & 8 &  & 2 & 3 & 4 \\ \midrule
Logloss & 0.3692 & \cellcolor{LightCyan}\textbf{0.3690} & 0.3691 & 0.3692 &  & 0.3697 & \cellcolor{LightCyan}\textbf{0.3690} & 0.3695 \\
AUC & 0.7975 & \cellcolor{LightCyan}\textbf{0.7980} & 0.7978 & 0.7976 &  & 0.7970 & \cellcolor{LightCyan}\textbf{0.7980} & 0.7971 \\ \bottomrule
\end{tabular}
\end{table}

\subsection*{E. Standard Deviations of Results}
{\ZB Standard deviations of experimental results are presented in Tables~\ref{stdofAUC}, \ref{stdoflogloss}, and \ref{stdofmlrating}.}

\begin{table}[ht]
\setlength{\tabcolsep}{2pt}
\caption{\ZB Standard deviations of AUC calculated for replicated experiments across different datasets and methods.}\label{stdofAUC}
\centering
\begin{tabular}{rrrrrr}
  \hline
 & Avazu & Criteo & Frappe & KDD2012 & ML-Tag \\ 
  \hline
LR & 1.913E-04 & 6.116E-05 & 3.624E-04 & 2.691E-04 & 2.691E-04 \\ 
  FM & 3.095E-04 & 9.937E-05 & 3.218E-04 & 1.740E-04 & 1.740E-04 \\ 
  FFM & 4.092E-04 & 6.282E-05 & 5.128E-04 & 1.995E-04 & 1.995E-04 \\ 
  xDFM & 2.138E-04 & 2.992E-05 & 8.672E-04 & 3.879E-04 & 3.879E-04 \\ 
  AFN & 2.081E-04 & 8.659E-05 & 7.002E-04 & 4.718E-04 & 4.718E-04 \\ 
  IFM & 3.258E-04 & 3.288E-05 & 6.828E-04 & 2.106E-04 & 2.106E-04 \\ 
  INN & 1.682E-04 & 1.151E-04 & 2.716E-04 & 1.829E-04 & 1.829E-04 \\ 
  LLFM & 2.361E-04 & 9.480E-05 & 1.210E-03 & 2.369E-04 & 2.369E-04 \\ 
  RaFM & 1.730E-04 & 9.809E-05 & 8.760E-04 & 2.758E-04 & 2.758E-04 \\ 
  FWL & 1.849E-04 & 1.038E-04 & 4.582E-04 & 1.868E-04 & 1.868E-04 \\ 
  WD & 4.823E-04 & 1.683E-04 & - & 5.544E-04 & 4.066E-04 \\ 
  DeepFM & 4.508E-04 & 3.692E-04 & - & 6.735E-04 & 4.119E-04 \\ 
  NODE & 1.874E-04 & 3.383E-04 & - & 3.005E-04 & 1.608E-04 \\ 
  Tabnet & 2.542E-04 & 1.855E-03 & - & 3.202E-03 & 2.514E-03 \\ 
  TabTran & 5.326E-04 & 1.001E-04 & - & 1.768E-03 & 1.193E-03 \\ 
  SAINT & 2.235E-03 & 6.611E-04 & - & 1.258E-03 & 9.566E-04 \\ 
  FMLP & 2.156E-04 & 8.937E-05 & - & 3.690E-04 & 7.389E-04 \\ 
  Ours & 1.190E-04 & 8.275E-05 & 7.967E-04 & 2.773E-04 & 5.941E-04 \\ 
   \hline
\end{tabular}
\end{table}

\begin{table}[ht]
\setlength{\tabcolsep}{2pt}
\caption{\ZB Standard deviations of Logloss calculated for replicated experiments across different datasets and methods.}\label{stdoflogloss}
\begin{tabular}{rrrrrr}
  \hline
 Method & Avazu & Criteo & Frappe & KDD2012 & ML-Tag \\ 
  \hline
LR & 1.659E-04 & 6.392E-05 & 6.114E-03 & 3.772E-05 & 4.665E-04 \\ 
  FM & 1.674E-04 & 5.342E-05 & 2.421E-3 & 4.852E-05 & 4.134E-04 \\ 
  FFM & 1.588E-04 & 4.451E-05 & 4.289E-03 & 2.583E-05 & 6.233E-04 \\ 
  xDFM & 1.049E-04 & 4.173E-05 & 2.107E-03 & 5.635E-05 & 5.694E-04 \\ 
  AFN & 1.463E-04 & 4.826E-05 & 2.963E-3 & 5.886E-05 & 6.782E-04 \\ 
  IFM & 9.781E-05 & 4.639E-05 & 5.814E-03 & 6.055E-05 & 3.727E-04 \\ 
  INN & 1.446E-04 & 2.875E-05 & 3.129E-03 & 1.367E-05 & 5.531E-04 \\ 
  LLFM & 1.471E-04 & 4.703E-05 & 6.832E-03 & 6.135E-05 & 4.734E-04 \\ 
  RaFM & 1.871E-04 & 5.116E-05 & 2.645E-03 & 2.871E-05 & 5.465E-04 \\ 
  FWL & 1.498E-04 & 3.429E-05 & 1.874E-03 & 4.248E-05 & 2.129E-04 \\ 
  WD & 3.362E-04 & 1.158E-04 & - & 1.281E-04 & 1.274E-03 \\ 
  DeepFM & 2.947E-04 & 3.143E-04 & - & 1.402E-04 & 8.384E-04 \\ 
  NODE & 2.029E-04 & 6.613E-04 & - & 2.339E-04 & 4.172E-04 \\ 
  Tabnet & 1.624E-04 & 5.815E-03 & - & 3.615E-04 & 3.719E-03 \\ 
  TabTran & 3.785E-04 & 7.982E-05 & - & 3.637E-04 & 3.180E-03 \\ 
  SAINT & 8.591E-04 & 6.181E-04 & - & 2.000E-04 & 1.440E-03 \\ 
  FMLP & 9.540E-05 & 1.720E-04 & - & 7.310E-05 & 1.804E-03 \\ 
  Ours & 8.830E-05 & 9.370E-05 & 1.944E-03 & 6.448E-05 & 1.622E-03 \\ 
     \hline
\end{tabular}
\end{table}

\begin{table}[ht]
\caption{\ZB Standard deviations of MSE calculated for replicated experiments across different methods for the {\tt ML-Rating} dataset.}\label{stdofmlrating}
\begin{tabular}{@{}ll@{}}
\toprule
Method & Standard deviation \\ \midrule
LR     & 2.118E-03          \\
FM     & 1.637E-03          \\
FFM    & 1.436E-03          \\
xDFM   & 8.910E-04          \\
AFN    & 1.135E-03          \\
IFM    & 1.683E-03          \\
INN    & 1.062E-03          \\
LLFM   & 1.332E-03          \\
RaFM   & 2.493E-03          \\
FWL    & 1.624E-03          \\
Ours   & 2.112E-03          \\ \bottomrule
\end{tabular}
\end{table}

\subsection*{F. Statistical Test for Results in Tables 3-5}
{\ZB  We conduct ANOVA (Dunnett's test) on each group of results to see if results with our model are significantly batter than the baselines. This is better than conducting simple t-test where results of each baseline and our model are compared pair-wise and independently. Dunnett's test is a statistical method for controlling the Type I error rate, when making multiple hypothesis tests that involve comparing a variety of methods to an experimental standard or new treatment.  In this work, we compare many existing methods to our method (comparisons called ``planned contrasts'').  Using Dunnett's test provides a conservative test of significance to ensure that the probability of falsely rejecting the null hypothesis (i.e. detecting that our method is better than other methods, when it is not in reality) does not grow with the number of hypotheses tested.  Since we are interested in testing the hypothesis that our method is better than alternatives, we use one-sided statistical tests.

We used ANOVA undertaken in R and using the package {\tt multcomp}~\citelatex{hothorn2008simultaneous_supp} to perform multiple comparisons.
In the following tables, The column ``Estimate" shows the estimated difference between the mean Logloss/AUC for each baseline and our method; ``Std. Error" is the standard deviation of the estimate; ``t value" is the t-statistic for Dunnett's test; and Pr($<$t) (Pr($>$t)) is the probability of obtaining a t-statistic less (greater) than the observed t-statistic.}

\begin{table}[ht]
\setlength{\tabcolsep}{2pt}
\caption{Dunnett's test (one-sided) \wrt Table 3, AUC, {\tt Avazu} dataset.}
\centering
\begin{tabular}{rrrrr}
  \hline
 & Estimate & Std. Error & t value & Pr($<$t) \\ 
  \hline
LR minus Ours & -2.183E-02 & 1.546E-04 & -1.412E+02 & 0.000E+00 \\ 
  FM minus Ours & -1.269E-02 & 1.546E-04 & -8.209E+01 & 0.000E+00 \\ 
  FFM minus Ours & -6.603E-03 & 1.546E-04 & -4.271E+01 & 0.000E+00 \\ 
  xDFM minus Ours & -6.631E-03 & 1.546E-04 & -4.289E+01 & 0.000E+00 \\ 
  AFN minus Ours & -9.455E-03 & 1.546E-04 & -6.116E+01 & 0.000E+00 \\ 
  IFM minus Ours & -7.619E-03 & 1.546E-04 & -4.928E+01 & 0.000E+00 \\ 
  INN minus Ours & -2.203E-02 & 1.546E-04 & -1.425E+02 & 0.000E+00 \\ 
  LLFM minus Ours & -1.186E-02 & 1.546E-04 & -7.674E+01 & 0.000E+00 \\ 
  RaFM minus Ours & -7.790E-03 & 1.546E-04 & -5.039E+01 & 0.000E+00 \\ 
  FWL minus Ours & -3.496E-03 & 1.546E-04 & -2.261E+01 & 0.000E+00 \\ 
   \hline
\end{tabular}
\end{table}

\begin{table}[ht]
\setlength{\tabcolsep}{2pt}
\caption{Dunnett's test (one-sided) \wrt Table 5, AUC, {\tt Avazu} dataset.}
\centering
\begin{tabular}{rrrrr}
  \hline
 & Estimate & Std. Error & t value & Pr($<$t) \\ 
  \hline
WD minus Ours & -9.373E-03 & 5.419E-04 & -1.730E+01 & 0.000E+00 \\ 
  DeepFM minus Ours & -8.943E-03 & 5.419E-04 & -1.650E+01 & 0.000E+00 \\ 
  NODE minus Ours & -3.302E-02 & 5.419E-04 & -6.093E+01 & 0.000E+00 \\ 
  Tabnet minus Ours & -1.468E-02 & 5.419E-04 & -2.708E+01 & 0.000E+00 \\ 
  TabTran minus Ours & -1.309E-02 & 5.419E-04 & -2.416E+01 & 0.000E+00 \\ 
  SAINT minus Ours & -2.049E-02 & 5.419E-04 & -3.782E+01 & 0.000E+00 \\ 
  FMLP minus Ours & -8.048E-03 & 5.419E-04 & -1.485E+01 & 0.000E+00 \\ 
   \hline
\end{tabular}
\end{table}

\begin{table}[ht]
\setlength{\tabcolsep}{2pt}
\caption{Dunnett's test (one-sided) \wrt Table 3, AUC, {\tt Criteo} dataset.}
\centering
\begin{tabular}{rrrrr}
  \hline
 & Estimate & Std. Error & t value & Pr($<$t) \\ 
  \hline
LR minus Ours & -2.037E-02 & 5.275E-05 & -3.861E+02 & 0.000E+00 \\ 
  FM minus Ours & -6.306E-03 & 5.275E-05 & -1.195E+02 & 0.000E+00 \\ 
  FFM minus Ours & -3.768E-03 & 5.275E-05 & -7.142E+01 & 0.000E+00 \\ 
  xDFM minus Ours & -3.263E-03 & 5.275E-05 & -6.186E+01 & 0.000E+00 \\ 
  AFN minus Ours & -2.002E-03 & 5.275E-05 & -3.795E+01 & 0.000E+00 \\ 
  IFM minus Ours & -3.070E-03 & 5.275E-05 & -5.819E+01 & 0.000E+00 \\ 
  INN minus Ours & -7.782E-03 & 5.275E-05 & -1.475E+02 & 0.000E+00 \\ 
  LLFM minus Ours & -5.057E-03 & 5.275E-05 & -9.587E+01 & 0.000E+00 \\ 
  RaFM minus Ours & -4.004E-03 & 5.275E-05 & -7.589E+01 & 0.000E+00 \\ 
  FWL minus Ours & -1.525E-03 & 5.275E-05 & -2.891E+01 & 0.000E+00 \\ 
   \hline
\end{tabular}
\end{table}

\begin{table}[ht]
\setlength{\tabcolsep}{2pt}
\caption{Dunnett's test (one-sided) \wrt Table 5, AUC, {\tt Criteo} dataset.}
\centering
\begin{tabular}{rrrrr}
  \hline
 & Estimate & Std. Error & t value & Pr($<$t) \\ 
  \hline
WD minus Ours & -3.914E-03 & 4.573E-04 & -8.560E+00 & 1.132E-08 \\ 
  DeepFM minus Ours & -5.277E-03 & 4.573E-04 & -1.154E+01 & 2.076E-14 \\ 
  NODE minus Ours & -3.486E-02 & 4.573E-04 & -7.624E+01 & 0.000E+00 \\ 
  Tabnet minus Ours & -1.502E-02 & 4.573E-04 & -3.285E+01 & 0.000E+00 \\ 
  TabTran minus Ours & -6.646E-03 & 4.573E-04 & -1.454E+01 & 0.000E+00 \\ 
  SAINT minus Ours & -7.954E-03 & 4.573E-04 & -1.740E+01 & 0.000E+00 \\ 
  FMLP minus Ours & -8.065E-03 & 4.573E-04 & -1.764E+01 & 0.000E+00 \\ 
   \hline
\end{tabular}
\end{table}

\begin{table}[ht]
\setlength{\tabcolsep}{2pt}
\caption{Dunnett's test (one-sided) \wrt Table 3, AUC, {\tt KDD2012} dataset.}
\centering
\begin{tabular}{rrrrr}
  \hline
 & Estimate & Std. Error & t value & Pr($<$t) \\ 
  \hline
LR minus Ours & -2.809E-02 & 1.745E-04 & -1.609E+02 & 0.000E+00 \\ 
  FM minus Ours & -1.343E-02 & 1.745E-04 & -7.699E+01 & 0.000E+00 \\ 
  FFM minus Ours & -1.466E-02 & 1.745E-04 & -8.404E+01 & 0.000E+00 \\ 
  xDFM minus Ours & -9.294E-03 & 1.745E-04 & -5.326E+01 & 0.000E+00 \\ 
  AFN minus Ours & -7.489E-03 & 1.745E-04 & -4.291E+01 & 0.000E+00 \\ 
  IFM minus Ours & -3.051E-03 & 1.745E-04 & -1.748E+01 & 0.000E+00 \\ 
  INN minus Ours & -2.216E-02 & 1.745E-04 & -1.270E+02 & 0.000E+00 \\ 
  LLFM minus Ours & -1.262E-02 & 1.745E-04 & -7.233E+01 & 0.000E+00 \\ 
  RaFM minus Ours & -2.020E-02 & 1.745E-04 & -1.158E+02 & 0.000E+00 \\ 
  FWL minus Ours & -7.082E-03 & 1.745E-04 & -4.058E+01 & 0.000E+00 \\
   \hline
\end{tabular}
\end{table}

\begin{table}[ht]
\setlength{\tabcolsep}{2pt}
\caption{Dunnett's test (one-sided) \wrt Table 5, AUC, {\tt KDD2012} dataset.}
\centering
\begin{tabular}{rrrrr}
  \hline
 & Estimate & Std. Error & t value & Pr($<$t) \\ 
  \hline
WD minus Ours & -7.221E-03 & 9.769E-04 & -7.392E+00 & 3.217E-06 \\ 
  DeepFM minus Ours & -8.086E-03 & 9.769E-04 & -8.277E+00 & 3.443E-07 \\ 
  NODE minus Ours & -3.087E-02 & 9.769E-04 & -3.160E+01 & 0.000E+00 \\ 
  Tabnet minus Ours & -5.262E-02 & 9.769E-04 & -5.387E+01 & 0.000E+00 \\ 
  TabTran minus Ours & -3.127E-02 & 9.769E-04 & -3.201E+01 & 0.000E+00 \\ 
  SAINT minus Ours & -2.024E-02 & 9.769E-04 & -2.072E+01 & 0.000E+00 \\ 
  FMLP minus Ours & -7.713E-03 & 9.769E-04 & -7.896E+00 & 3.354E-07 \\ 
   \hline
\end{tabular}
\end{table}

\begin{table}[ht]
\setlength{\tabcolsep}{2pt}
\caption{Dunnett's test (one-sided) \wrt Table 4, AUC, {\tt ML-Tag} dataset. }
\centering
\begin{tabular}{rrrrr}
  \hline
 & Estimate & Std. Error & t value & Pr($<$t) \\ 
  \hline
LR minus Ours & -4.026E-02 & 2.346E-04 & -1.716E+02 & 0.000E+00 \\ 
  FM minus Ours & -8.514E-03 & 2.346E-04 & -3.629E+01 & 0.000E+00 \\ 
  FFM minus Ours & -4.911E-03 & 2.346E-04 & -2.093E+01 & 0.000E+00 \\ 
  xDFM minus Ours & -5.850E-03 & 2.346E-04 & -2.493E+01 & 0.000E+00 \\ 
  AFN minus Ours & -9.904E-03 & 2.346E-04 & -4.221E+01 & 0.000E+00 \\ 
  IFM minus Ours & -1.970E-02 & 2.346E-04 & -8.395E+01 & 0.000E+00 \\ 
  INN minus Ours & -1.707E-02 & 2.346E-04 & -7.274E+01 & 0.000E+00 \\ 
  LLFM minus Ours & -1.239E-02 & 2.346E-04 & -5.282E+01 & 0.000E+00 \\ 
  RaFM minus Ours & -1.023E-02 & 2.346E-04 & -4.362E+01 & 0.000E+00 \\ 
  FWL minus Ours & -6.383E-03 & 2.346E-04 & -2.721E+01 & 0.000E+00 \\ 
   \hline
\end{tabular}
\end{table}

\begin{table}[ht]
\setlength{\tabcolsep}{2pt}
\caption{Dunnett's test (one-sided) \wrt Table 5, AUC, {\tt ML-Tag} dataset. }
\centering
\begin{tabular}{rrrrr}
  \hline
 & Estimate & Std. Error & t value & Pr($<$t) \\ 
  \hline
WD minus Ours & -1.496E-02 & 7.043E-04 & -2.124E+01 & 0.000E+00 \\ 
  DeepFM minus Ours & -8.884E-03 & 7.043E-04 & -1.261E+01 & 9.992E-16 \\ 
  NODE minus Ours & -2.681E-02 & 7.043E-04 & -3.806E+01 & 0.000E+00 \\ 
  Tabnet minus Ours & -3.177E-02 & 7.043E-04 & -4.510E+01 & 0.000E+00 \\ 
  TabTran minus Ours & -2.176E-02 & 7.043E-04 & -3.089E+01 & 0.000E+00 \\ 
  SAINT minus Ours & -3.244E-02 & 7.043E-04 & -4.606E+01 & 0.000E+00 \\ 
  FMLP minus Ours & -5.762E-03 & 7.043E-04 & -8.182E+00 & 5.051E-09 \\
   \hline
\end{tabular}
\end{table}

\begin{table}[ht]
\setlength{\tabcolsep}{2pt}
\caption{Dunnett's test (one-sided) \wrt Table 4, AUC, {\tt Frappe} dataset.}
\centering
\begin{tabular}{rrrrr}
  \hline
 & Estimate & Std. Error & t value & Pr($<$t) \\ 
  \hline
LR minus Ours & -5.417E-02 & 4.413E-04 & -1.228E+02 & 0.000E+00 \\ 
  FM minus Ours & -2.632E-03 & 4.413E-04 & -5.964E+00 & 1.322E-06 \\ 
  FFM minus Ours & -2.869E-03 & 4.413E-04 & -6.502E+00 & 2.042E-07 \\ 
  xDFM minus Ours & -2.707E-03 & 4.413E-04 & -6.134E+00 & 1.633E-06 \\ 
  AFN minus Ours & -1.334E-03 & 4.413E-04 & -3.024E+00 & 1.557E-02 \\ 
  IFM minus Ours & -2.333E-03 & 4.413E-04 & -5.287E+00 & 1.620E-05 \\ 
  INN minus Ours & -1.160E-02 & 4.413E-04 & -2.628E+01 & 0.000E+00 \\ 
  LLFM minus Ours & -3.413E-03 & 4.413E-04 & -7.735E+00 & 9.542E-10 \\ 
  RaFM minus Ours & -1.174E-03 & 4.413E-04 & -2.660E+00 & 3.747E-02 \\ 
  FWL minus Ours & -2.098E-03 & 4.413E-04 & -4.754E+00 & 1.036E-04 \\ 
   \hline
\end{tabular}
\end{table}


\begin{table}[ht]
\setlength{\tabcolsep}{2pt}
\caption{Dunnett's test (one-sided) \wrt Table 3, Logloss, {\tt Avazu} dataset.}
\centering
\begin{tabular}{rrrrr}
  \hline
 & Estimate & Std. Error & t value & Pr($>$t) \\ 
  \hline
LR minus Ours & 1.326E-02 & 9.156E-05 & 1.448E+02 & 0.000E+00 \\ 
  FM minus Ours & 8.364E-03 & 9.156E-05 & 9.135E+01 & 0.000E+00 \\ 
  FFM minus Ours & 5.137E-03 & 9.156E-05 & 5.610E+01 & 0.000E+00 \\ 
  xDFM minus Ours & 4.105E-03 & 9.156E-05 & 4.483E+01 & 0.000E+00 \\ 
  AFN minus Ours & 6.793E-03 & 9.156E-05 & 7.419E+01 & 0.000E+00 \\ 
  IFM minus Ours & 4.698E-03 & 9.156E-05 & 5.131E+01 & 0.000E+00 \\ 
  INN minus Ours & 1.304E-02 & 9.156E-05 & 1.425E+02 & 0.000E+00 \\ 
  LLFM minus Ours & 8.141E-03 & 9.156E-05 & 8.891E+01 & 0.000E+00 \\ 
  RaFM minus Ours & 5.512E-03 & 9.156E-05 & 6.020E+01 & 0.000E+00 \\ 
  FWL minus Ours & 2.775E-03 & 9.156E-05 & 3.030E+01 & 0.000E+00 \\ 
   \hline
\end{tabular}
\end{table}

\begin{table}[ht]
\setlength{\tabcolsep}{2pt}
\caption{Dunnett's test (one-sided) \wrt Table 5, Logloss, {\tt Avazu} dataset.}
\centering
\begin{tabular}{rrrrr}
  \hline
 & Estimate & Std. Error & t value & Pr($>$t) \\ 
  \hline
WD minus Ours & 6.061E-03 & 2.414E-04 & 2.511E+01 & 0.000E+00 \\ 
  DeepFM minus Ours & 6.062E-03 & 2.414E-04 & 2.511E+01 & 0.000E+00 \\ 
  NODE minus Ours & 1.979E-02 & 2.414E-04 & 8.197E+01 & 0.000E+00 \\ 
  Tabnet minus Ours & 9.386E-03 & 2.414E-04 & 3.888E+01 & 0.000E+00 \\ 
  TabTran minus Ours & 8.463E-03 & 2.414E-04 & 3.506E+01 & 0.000E+00 \\ 
  SAINT minus Ours & 1.311E-02 & 2.414E-04 & 5.429E+01 & 0.000E+00 \\ 
  FMLP minus Ours & 5.051E-03 & 2.414E-04 & 2.092E+01 & 0.000E+00 \\ 
   \hline
\end{tabular}
\end{table}

\begin{table}[ht]
\setlength{\tabcolsep}{2pt}
\caption{Dunnett's test (one-sided) \wrt Table 3, Logloss, {\tt Criteo} dataset.}
\centering
\begin{tabular}{rrrrr}
  \hline
 & Estimate & Std. Error & t value & Pr($>$t) \\ 
  \hline
LR minus Ours & 1.856E-02 & 3.344E-05 & 5.550E+02 & 0.000E+00 \\ 
  FM minus Ours & 4.482E-03 & 3.344E-05 & 1.340E+02 & 0.000E+00 \\ 
  FFM minus Ours & 3.770E-03 & 3.344E-05 & 1.127E+02 & 0.000E+00 \\ 
  xDFM minus Ours & 3.047E-03 & 3.344E-05 & 9.111E+01 & 0.000E+00 \\ 
  AFN minus Ours & 2.010E-03 & 3.344E-05 & 6.009E+01 & 0.000E+00 \\ 
  IFM minus Ours & 2.800E-03 & 3.344E-05 & 8.374E+01 & 0.000E+00 \\ 
  INN minus Ours & 7.232E-03 & 3.344E-05 & 2.162E+02 & 0.000E+00 \\ 
  LLFM minus Ours & 5.002E-03 & 3.344E-05 & 1.496E+02 & 0.000E+00 \\ 
  RaFM minus Ours & 4.002E-03 & 3.344E-05 & 1.197E+02 & 0.000E+00 \\ 
  FWL minus Ours & 1.520E-03 & 3.344E-05 & 4.545E+01 & 0.000E+00 \\ 
   \hline
\end{tabular}
\end{table}

\begin{table}[ht]
\setlength{\tabcolsep}{2pt}
\caption{Dunnett's test (one-sided) \wrt Table 5, Logloss, {\tt Criteo} dataset.}
\centering
\begin{tabular}{rrrrr}
  \hline
 & Estimate & Std. Error & t value & Pr($>$t) \\ 
  \hline
WD minus Ours & 3.713E-03 & 1.319E-03 & 2.815E+00 & 2.204E-02 \\ 
  DeepFM minus Ours & 5.140E-03 & 1.319E-03 & 3.898E+00 & 1.376E-03 \\ 
  NODE minus Ours & 3.099E-02 & 1.319E-03 & 2.350E+01 & 0.000E+00 \\ 
  Tabnet minus Ours & 2.217E-02 & 1.319E-03 & 1.681E+01 & 0.000E+00 \\ 
  TabTran minus Ours & 6.374E-03 & 1.319E-03 & 4.833E+00 & 9.159E-05 \\ 
  SAINT minus Ours & 7.554E-03 & 1.319E-03 & 5.728E+00 & 4.669E-06 \\ 
  FMLP minus Ours & 7.637E-03 & 1.319E-03 & 5.790E+00 & 1.405E-05 \\ 
   \hline
\end{tabular}
\end{table}

\begin{table}[ht]
\setlength{\tabcolsep}{2pt}
\caption{Dunnett's test (one-sided) \wrt Table 3, Logloss, {\tt KDD2012} dataset.}
\centering
\begin{tabular}{rrrrr}
  \hline
 & Estimate & Std. Error & t value & Pr($>$t) \\ 
  \hline
LR minus Ours & 6.618E-03 & 3.044E-05 & 2.174E+02 & 0.000E+00 \\ 
  FM minus Ours & 2.899E-03 & 3.044E-05 & 9.522E+01 & 0.000E+00 \\ 
  FFM minus Ours & 3.508E-03 & 3.044E-05 & 1.152E+02 & 0.000E+00 \\ 
  xDFM minus Ours & 2.294E-03 & 3.044E-05 & 7.534E+01 & 0.000E+00 \\ 
  AFN minus Ours & 1.935E-03 & 3.044E-05 & 6.355E+01 & 0.000E+00 \\ 
  IFM minus Ours & 7.236E-04 & 3.044E-05 & 2.377E+01 & 0.000E+00 \\ 
  INN minus Ours & 8.719E-03 & 3.044E-05 & 2.864E+02 & 0.000E+00 \\ 
  LLFM minus Ours & 2.800E-03 & 3.044E-05 & 9.196E+01 & 0.000E+00 \\ 
  RaFM minus Ours & 4.628E-03 & 3.044E-05 & 1.520E+02 & 0.000E+00 \\ 
  FWL minus Ours & 2.136E-03 & 3.044E-05 & 7.015E+01 & 0.000E+00 \\ 
   \hline
\end{tabular} 
\end{table}

\begin{table}[ht]
\setlength{\tabcolsep}{2pt}
\caption{Dunnett's test (one-sided) \wrt Table 5, Logloss, {\tt KDD2012} dataset.}
\centering
\begin{tabular}{rrrrr}
  \hline
 & Estimate & Std. Error & t value & Pr($>$t) \\ 
  \hline
WD minus Ours & 1.625E-03 & 1.554E-04 & 1.046E+01 & 1.400E-09 \\ 
  DeepFM minus Ours & 1.812E-03 & 1.554E-04 & 1.166E+01 & 3.488E-11 \\ 
  NODE minus Ours & 7.372E-03 & 1.554E-04 & 4.745E+01 & 0.000E+00 \\ 
  Tabnet minus Ours & 1.079E-02 & 1.554E-04 & 6.942E+01 & 0.000E+00 \\ 
  TabTran minus Ours & 6.774E-03 & 1.554E-04 & 4.360E+01 & 0.000E+00 \\ 
  SAINT minus Ours & 4.295E-03 & 1.554E-04 & 2.764E+01 & 0.000E+00 \\ 
  FMLP minus Ours & 1.465E-03 & 1.554E-04 & 9.427E+00 & 3.801E-08 \\ 
   \hline
\end{tabular}
\end{table}

\begin{table}[ht]
\setlength{\tabcolsep}{2pt}
\caption{Dunnett's test (one-sided) \wrt Table 4, Logloss, {\tt ML-Tag} dataset.}
\centering
\begin{tabular}{rrrrr}
  \hline
 & Estimate & Std. Error & t value & Pr($>$t) \\ 
  \hline
LR minus Ours & 1.227E-01 & 4.351E-04 & 2.820E+02 & 0.000E+00 \\ 
  FM minus Ours & 2.713E-02 & 4.351E-04 & 6.236E+01 & 0.000E+00 \\ 
  FFM minus Ours & 2.124E-02 & 4.351E-04 & 4.882E+01 & 0.000E+00 \\ 
  xDFM minus Ours & 3.775E-02 & 4.351E-04 & 8.676E+01 & 0.000E+00 \\ 
  AFN minus Ours & 4.439E-02 & 4.351E-04 & 1.020E+02 & 0.000E+00 \\ 
  IFM minus Ours & 6.052E-02 & 4.351E-04 & 1.391E+02 & 0.000E+00 \\ 
  INN minus Ours & 6.809E-02 & 4.351E-04 & 1.565E+02 & 0.000E+00 \\ 
  LLFM minus Ours & 4.501E-02 & 4.351E-04 & 1.035E+02 & 0.000E+00 \\ 
  RaFM minus Ours & 4.064E-02 & 4.351E-04 & 9.341E+01 & 0.000E+00 \\ 
  FWL minus Ours & 2.880E-02 & 4.351E-04 & 6.621E+01 & 0.000E+00 \\ 
   \hline
\end{tabular}
\end{table}

\begin{table}[ht]
\setlength{\tabcolsep}{2pt}
\caption{Dunnett's test (one-sided) \wrt Table 5, Logloss, {\tt ML-Tag} dataset.}
\centering
\begin{tabular}{rrrrr}
  \hline
 & Estimate & Std. Error & t value & Pr($>$t) \\ 
  \hline
WD minus Ours & 6.193E-02 & 1.312E-03 & 4.722E+01 & 0.000E+00 \\ 
  DeepFM minus Ours & 4.385E-02 & 1.312E-03 & 3.344E+01 & 0.000E+00 \\ 
  NODE minus Ours & 8.995E-02 & 1.312E-03 & 6.858E+01 & 0.000E+00 \\ 
  Tabnet minus Ours & 9.838E-02 & 1.312E-03 & 7.501E+01 & 0.000E+00 \\ 
  TabTran minus Ours & 8.086E-02 & 1.312E-03 & 6.165E+01 & 0.000E+00 \\ 
  SAINT minus Ours & 1.021E-01 & 1.312E-03 & 7.787E+01 & 0.000E+00 \\ 
  FMLP minus Ours & 4.093E-02 & 1.312E-03 & 3.121E+01 & 0.000E+00 \\ 
   \hline
\end{tabular}
\end{table}

\begin{table}[ht]
\setlength{\tabcolsep}{2pt}
\caption{Dunnett's test (one-sided) \wrt Table 4, Logloss, {\tt Frappe} dataset.}
\centering
\begin{tabular}{rrrrr}
  \hline
 & Estimate & Std. Error & t value & Pr($>$t) \\ 
  \hline
LR minus Ours & 1.892E-01 & 2.554E-03 & 7.406E+01 & 0.000E+00 \\ 
  FM minus Ours & 2.961E-03 & 2.554E-03 & 1.159E+00 & 4.656E-01 \\ 
  FFM minus Ours & 1.331E-02 & 2.554E-03 & 5.212E+00 & 2.281E-05 \\ 
  xDFM minus Ours & 1.693E-02 & 2.554E-03 & 6.627E+00 & 7.710E-08 \\ 
  AFN minus Ours & 9.372E-03 & 2.554E-03 & 3.669E+00 & 2.668E-03 \\ 
  IFM minus Ours & 1.035E-02 & 2.554E-03 & 4.054E+00 & 8.672E-04 \\ 
  INN minus Ours & 8.327E-02 & 2.554E-03 & 3.260E+01 & 0.000E+00 \\ 
  LLFM minus Ours & 1.107E-02 & 2.554E-03 & 4.334E+00 & 4.068E-04 \\ 
  RaFM minus Ours & -1.945E-03 & 2.554E-03 & -7.614E-01 & 9.885E-01 \\ 
  FWL minus Ours & 1.067E-02 & 2.554E-03 & 4.179E+00 & 5.820E-04 \\ 
   \hline
\end{tabular}
\end{table}

\begin{table}[ht]
\setlength{\tabcolsep}{2pt}
\caption{Dunnett's test (one-sided) \wrt Table 4, MSE, {\tt ML-Rating} dataset.}
\centering
\begin{tabular}{rrrrr}
  \hline
 & Estimate & Std. Error & t value & Pr($>$t) \\ 
  \hline
LR minus Ours & 8.887E-02 & 1.051E-03 & 8.458E+01 & 0.000E+00 \\ 
  FM minus Ours & 1.424E-02 & 1.051E-03 & 1.356E+01 & 0.000E+00 \\ 
  FFM minus Ours & 2.936E-02 & 1.051E-03 & 2.795E+01 & 0.000E+00 \\ 
  xDFM minus Ours & 2.335E-02 & 1.051E-03 & 2.223E+01 & 0.000E+00 \\ 
  AFN minus Ours & 3.198E-02 & 1.051E-03 & 3.044E+01 & 0.000E+00 \\ 
  IFM minus Ours & 1.533E-02 & 1.051E-03 & 1.459E+01 & 0.000E+00 \\ 
  INN minus Ours & 1.335E-02 & 1.051E-03 & 1.271E+01 & 0.000E+00 \\ 
  LLFM minus Ours & 1.094E-02 & 1.051E-03 & 1.041E+01 & 4.219E-14 \\ 
  RaFM minus Ours & 2.834E-02 & 1.051E-03 & 2.697E+01 & 0.000E+00 \\ 
  FWL minus Ours & 2.818E-02 & 1.051E-03 & 2.682E+01 & 0.000E+00 \\ 
   \hline
\end{tabular} 
\end{table}

\bibliographystylelatex{IEEEtranN}
\bibliographylatex{latex}

\end{document}